\declaretheorem[name=Theorem]{theorem}
\declaretheorem[name=Proposition]{proposition}
\declaretheorem[name=Corollary]{corollary}
\newtheorem{definition}{Definition}
\newtheorem{assumption}{Assumption}
\newtheorem*{theorem*}{Theorem}
\newtheorem*{proposition*}{Proposition}
\newtheorem*{corollary*}{Corollary}
\newtheorem{remark}{Remark}
\newcommand{\linethrough}{\mathpalette\@thickbar}
\newcommand{\@thickbar}[2]{{#1\mkern0mu\vbox{
    \sbox\z@{$#1#2\mkern-1.5mu$}%
    \dimen@=\dimexpr\ht\tw@-\ht\z@+2\p@\relax 
    \hrule\@height0.5\p@ 
    \vskip\dimen@
    \box\z@}}
}
\title{Who's Gaming the System? A Causally-Motivated Approach for Detecting Strategic Adaptation}
\author{%
Trenton Chang$^{1}$ \quad Lindsay Warrenburg$^2$ \quad Sae-Hwan Park$^2$ \quad Ravi B. Parikh$^{2,3}$ \\ \textbf{Maggie Makar}$^1$ \quad \textbf{Jenna Wiens}$^1$\\
$^1$University of Michigan \quad $^2$University of Pennsylvania \quad $^3$Emory University\\
\texttt{\{ctrenton,mmakar,wiensj\}@umich.edu}\\
\texttt{\{lindsay.warrenburg,sae-hwan.park\}@pennmedicine.upenn.edu}\\
\texttt{ravi.bharat.parikh@emory.edu}\\
}
\begin{document}

\maketitle

\begin{abstract}
  In many settings, machine learning models may be used to inform decisions that impact individuals or entities who interact with the model. 
  Such entities, or \textit{agents}, may \emph{game} model decisions by manipulating their inputs to the model to obtain better outcomes and maximize some utility. 
  We consider a multi-agent setting where the goal is to identify the ``worst offenders:'' agents that are gaming most aggressively. 
 However, identifying such agents is difficult without being able to evaluate their utility function. Thus, we introduce a framework featuring a gaming deterrence parameter, a scalar that quantifies an agent's (un)willingness to game.
   We show that this gaming parameter is only partially identifiable. 
  By recasting the problem as a causal effect estimation problem where different agents represent different ``treatments,'' we prove that a ranking of all agents by their gaming parameters is identifiable. We present empirical results in a synthetic data study validating the usage of causal effect estimation for gaming detection and show in a case study of diagnosis coding behavior in the U.S. that our approach highlights features associated with gaming. 
\end{abstract}

\section{Introduction}

Machine learning (ML) models often guide decisions that impact individuals or entities.
Attributes describing an individual or entity are often inputs to such models.
In response, such entities may modify their attributes to obtain a more desirable outcome. But changing one's attributes may be costly due to the difficulty of generating supporting evidence, or penalties for fraud. This behavior is called \emph{gaming} or \emph{strategic adaptation}~\citep{hardt2016strategic}.
Strategic adaptation frames gaming as ``utility maximization:''  \emph{agents} change their attributes to maximize a payout, but incur a cost for modifying attributes.

As an illustrative example, we turn to the health insurance industry. In the United States (U.S.), contracted health insurance companies report their enrollees' diagnoses to the government, which calculates a payout based on reported diagnoses via a publicly available model~\cite{pope2004risk}. 
The \textit{payout} is intended to support care of the enrollee in relation to the diagnosis.
Companies may attempt to maximize payouts by reporting extraneous diagnoses, an illegal practice known as ``upcoding''~\citep{geruso2020upcoding}.
Despite increasing awareness of upcoding~\cite{geruso2020upcoding,silverman2004medicare, gani2018assessing, dragan2022association}, upcoding costs U.S. taxpayers over \$12B U.S. dollars annually~\cite{chernew2022report}, even with substantial investment in audits (\$100.7M U.S. dollars,  2023~\citep{cms2024justification}) and payout changes to adjust for gaming~\citep{kronick2014measuring, ma2024advance}. 
Since audits may not scale and often overlook fraud~\citep{rodenberger2017pre, shi2024monitoring}, 
tools for flagging gaming-prone agents could help target audits.
Beyond health insurance, gaming emerges in responses to credit-scoring algorithms~\citep{citron2014scored,  bambauer2018algorithm} and driver responses to rider allocation algorithms in ride-sharing apps~\citep{lee2015working}. 

In this work, we study how one can identify agents with the highest propensity to strategically manipulate their inputs given a dataset of agents and their observed model inputs.
A supervised approach is infeasible since fraud/gaming labels are unavailable in our setting. 
A common paradigm for fraud/gaming detection is unsupervised anomaly detection. 
However, gamed attributes may not be outlier-like.
The perspective of gaming as utility-maximizing behavior in strategic classification provides an alternative to existing fraud/gaming detection methods. 
Many works in strategic classification (\emph{e.g.},~\cite{hardt2016strategic}) assume known utility functions and identical feature manipulation costs across agents, which assists in identifying an agent's ``optimal'' gaming behavior. 
However, such assumptions may not always apply.
For example, in U.S. Medicare, due to the rarity of penalties for upcoding, it is unclear how to quantify the \emph{cost} of fraud. Furthermore, due to the large number of companies contracted with U.S. Medicare, with distinct incentives (\emph{e.g.}, for-profit vs. non-profit groups) and disjoint populations of patients, there may be heterogeneity in feature manipulation costs.

To bridge this gap, we propose a novel framework for modeling agent utilities by introducing a \textbf{gaming deterrence parameter}, which scales the perceived cost (to the agent) of gaming.
First, we show that directly estimating the gaming parameter is not possible: the best we can do is a lower bound on the gaming deterrence parameter.
However, by re-casting gaming detection as a causal effect estimation problem, where each agent represents a ``treatment,'' we prove that a \emph{ranking} of agents based on their gaming deterrence parameter is recoverable. 
Thus, we propose a causally-motivated ranking algorithm that produces a ranking of agents. 
Practically, agents most likely to game under our ranking could be flagged for further monitoring/auditing. While our framework is inspired by health insurance fraud, it applies more broadly to instances of gaming where multiple agents are gaming an ML-guided decision.

We evaluate the performance of causal effect estimators for gaming detection in a synthetic dataset.
Empirically, causal approaches rank the worst offenders higher than existing non-causal approaches that screen based on payouts/randomly, as well as anomaly detection methods. 
We then verify in a real-world U.S. Medicare claims dataset that causal effect estimation yields rankings correlated with the prevalence of for-profit healthcare providers, a suspected driver of gaming~\citep{silverman2004medicare, silverman2001profit}.

In summary: we \textbf{1)} extend strategic classification to model differences in gaming behavior across agents (Section~\ref{sec:background}), \textbf{2)} prove that point-identifying an agents' gaming parameter is impossible without strong assumptions (Section~\ref{sec:theory}), \textbf{3)} show that, by recasting gaming detection as causal effect estimation, one can recover a ranking of agents based on their gaming deterrence parameters (Section~\ref{sec:theory}), \textbf{4)} demonstrate empirically that our framework identifies the worst offenders with fewer audits than baselines (Section~\ref{sec:results}),
and \textbf{5)} show in a real-data case study that our approach yields rankings correlated with suspected drivers of gaming (Section~\ref{sec:results}).
Code to replicate our experiments will be made publicly available at \url{https://github.com/MLD3/gaming_detection}.

\begin{figure}
    \centering
    \includegraphics[width=\linewidth]{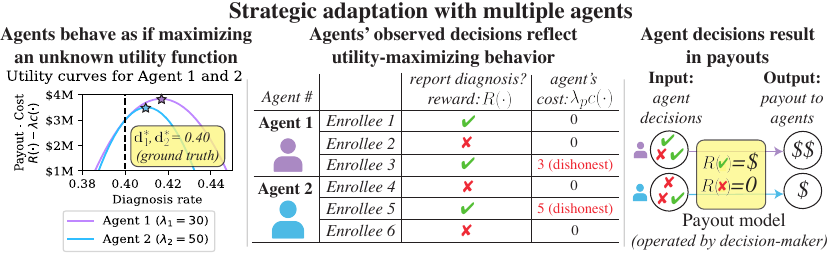}\vspace{-0mm}
    \caption{\textbf{Left:} Two agents with gaming deterrence parameters $\lambda_1=30$ (purple) and $\lambda_2 = 50$ (blue) maximize utility (reward $R$ - cost $c$) with respect to diagnosis rate. Gaming costs \textit{increase} in $\lambda_{(\cdot)}$, and lower an agent's optimal diagnosis rate (stars). \textbf{Center:} Agents' observed decisions reflect utility-maximizing behavior.
    \textbf{Right:} A decision-maker computes a payout based on agent decisions.}
    \label{fig:main}
    \vspace{-5mm}
\end{figure}

\section{Related works}
\label{sec:related}

\paragraph{Machine learning-based anomaly detection.} 
Fraud detection is often framed as an unsupervised anomaly/outlier detection problem~\citep{ramaswamy2000efficient, de2018tax, liu2008isolation, hariri2019extended, gomes2021insurance, li2022ecod, xu2023deep}. Such approaches assume gamed model inputs are outliers in some distribution. 
However, this assumption may be incorrect if gaming is common in the data, or if gaming results in only small changes to observed agent attributes.
In borrowing from strategic adaptation, we frame gaming as utility maximization, rather than making distributional assumptions about gamed agent attributes.

\paragraph{Strategic classification \& gaming in machine learning.}
A large body of work in strategic classification aims to design incentives to mitigate gaming/strategic behavior by agents. 
However, such works assume that feature manipulations costs are known/can be estimated across agents, or are identical~\citep{hardt2016strategic, chen2020learning, levanon2021strategic, milli2019social, zhang2021incentive, lechner2022learning, lechner2023strategic, bjorkegren2020manipulation, sundaram2023pac}.
In contrast, in our setting, feature manipulation costs are unknown and may differ across agents, but exhibit some shared structure that facilitates comparisons across agents.
Shao et al.~\citep{shao2024strategic} assumes that agent gaming capacities may differ by placing bounds on manipulation, which may be unrealistic.
Closest to our work is that of Dong et al.~\citep{dong2018strategic}, which also assumes unknown and differing agent costs, but does not leverage similarities in gaming across agents.
Our work supplements the strategic classification literature by studying a related yet fundamentally distinct problem: rather than designing incentives to mitigate strategic behavior, which is infeasible under unknown manipulation costs, we leverage differences in costs across agents to identify agents more likely to game.

\section{Background \& Problem Setup}
\label{sec:background}

We review strategic adaptation and extend it to model differences in gaming across agents. 

\smallskip\noindent\textbf{What is strategic adaptation?} Our work builds upon  strategic classification~\cite{hardt2016strategic}.
Consider a pre-existing model $f: \mathcal{D} \mapsto \mathbb{R}$ that maps agent attributes $d \in \mathcal{D}$ to a payout.
An \emph{agent} 
may leverage its knowledge of $f$ to 
change their attribute(s) $d$ according to some function $\Delta$:
\begin{equation}
    \Delta(d) \triangleq \underset{\tilde{d} \in \mathcal{D}}{\arg\max}\; R(\tilde{d}; f) - g(\tilde{d}, d)\label{eq:vanilla_strat}
\end{equation}
where $R: \mathcal{D} \to \mathbb{R}$ is the \emph{payout} of changing $d$ to $\tilde{d}$, and $g: \mathcal{D} \times \mathcal{D} \to \mathbb{R}_+$ is the \emph{cost} of manipulating $d$. When $\mathcal{D} \subseteq \mathbb{R}$, $g$ is often assumed to be ``separable;'' \emph{i.e.}, for some function $c$, $g(\tilde{d}, d) = c( \tilde{d} - d)$. $\Delta(\cdot)$ describes how an agent manipulates $d$ to obtain a higher payout from $f$. 
This behavior is called \emph{strategic adaptation} or \emph{gaming}.  
For simplicity, we assume $R = f$; \emph{i.e.}, the model $f$ directly determines the payout. 

\smallskip\noindent\textbf{Modeling agent variation in gaming.}
To extend strategic adaptation to multiple agents, we add a non-negative \emph{gaming deterrence parameter} $\lambda_p \in \mathbb{R}^+$ to Eq.~\ref{eq:vanilla_strat}.
Consider an observational dataset $\mathcal{D}_p \triangleq \{(\mathbf{x}_i, d_i)\}_{i=1}^{M_p}$ of an agent $p$'s decisions $d_i \in \{0, 1\}$ given some information $\mathbf{x}_i \in \mathcal{X}$. 
For simplicity, we assume $d_i$ is binary, though the proposed framework generalizes to non-binary decisions and arbitrary numbers of independent decisions. 
For example, a health insurance plan $p$ chooses whether to report that an enrollee has a diagnosis $d_i$ given enrollee characteristics $\mathbf{x}_i$.  
Agent assignment is \textit{mutually exclusive} (\emph{e.g.}, individuals are enrolled in one health insurance plan). 

If the agent knows that $d_i$ will be used as input to a payout model, they may have an incentive to \textit{increase} $d_i$ (without loss of generality) to obtain a higher payout. Let $\bar{d} = \frac{1}{M_p} \sum_{i=1}^{M_p} d_i$, and suppose each agent $p$ chooses $\bar{d}$ according to the following utility-maximization problem:
\begin{equation}
    P(d_i = 1 \mid p) \equiv \Delta_p(d^*_p) \triangleq \underset{\bar{d} \in [0, 1]}{\arg\max}\; R(\bar{d}) - \lambda_p c(\bar{d} - d_p^*) \label{eq:strat_obj}
\end{equation}
where $R: [0, 1] \to \mathbb{R}$, $c: \mathbb{R} \to \mathbb{R}_+$, and $d_p^*$ is the ground truth value of $\bar{d}$ given  the $\mathbf{x}_i$ seen by agent $p$. 
Thus, $\Delta_p(d^*_p)$ is the gamed/observed decision rate of agent $p$ in a population where the ground truth decision rate is $d^*_p$.
Although $\bar{d} \in [0, 1]$ since it is a proportion, our framework applies to $\bar{d}$ on arbitrary intervals.
This formulation assumes that each $\mathbf{x}_i$ is equally likely to be gamed, that $\mathbf{x}_i$ are truthfully observed, and that any difference between $\Delta_p(d^*_p)$ and $d^*_p$ is due to gaming. 

We focus on the gaming deterrence parameter $\lambda_p$, which scales the cost of manipulation $c(\cdot)$. $\lambda_p$ is non-negative and represents an agent's ``aversion'' to gaming. Lower values of $\lambda_p$ mean that agent $p$ is more willing to game. Thus, identifying agents most willing to game means finding agents with the lowest $\lambda_p$. 
We summarize multi-agent strategic adaptation in Figure~\ref{fig:main}.
Next, we introduce assumptions on the reward function $R$, cost function $c$, and ground truth $d^*_p$. 

\begin{assumption}[Shared rewards \& costs]
    Reward ($R$) and cost ($c$) functions are shared across agents.\label{assump:shared}
\end{assumption}\vspace{-2mm}
Sharing $R$ encodes the assumption that all agents are reacting to the same payout model, while sharing $c$ across agents encodes the belief agents must take similarly-costly actions to manipulate features (\emph{e.g.}, if insurance plans must follow/fraudulently report specific procedures to justify a diagnosis). Many works in strategic classification implicitly make a similar assumption (\emph{e.g.},~\citep{hardt2016strategic, bechavod2021gaming}).

\begin{assumption}[Increasing rewards]
    The reward function $R$ is strictly increasing in $\bar{d}$.\label{assump:increasing}
\end{assumption}\vspace{-2mm}
Increasing rewards formalizes the agent's incentive to perturb its decisions (\emph{i.e.}, inputs to the payout model) from $d_i = 0$ to $1$ in Eq.~\ref{eq:strat_obj}.

\begin{assumption}[Cost convexity]
The cost function $c$ is strictly convex and  minimized at 0 such that $c(0) = 0$ and $c'(0) = 0$, and increases for all agents $p$ for any $\bar{d} \geq d^*_p$. 
\label{assump:strict}
\end{assumption}\vspace{-3mm}
One possible $c$ is $c(x) = x^2$. Strict convexity ensures a unique cost-minimizing action, and $c(0) = 0$ ensures that $d^*_p$ is ground truth, such that increasing $\bar{d}$ incurs greater cost (\emph{e.g.}, Fig.~\ref{fig:main}, left).
\begin{assumption}[Diminishing or linear returns]
    The reward function $R$ is concave in $d_i$.\label{assump:diminishing}
\end{assumption}\vspace{-2mm}
For example, $R$ may be a $\log$ or affine function.
Assumption~\ref{assump:strict} (strictly convex $c$) and~\ref{assump:diminishing} ensure that $R$ cannot grow fast enough to offset manipulation costs. 
Furthermore, due to  Assumptions~\ref{assump:increasing}-~\ref{assump:diminishing}: 
\begin{remark}[Gaming is utility-maximizing] Given any agent $p$ and $d^*_p$, we have that $\Delta_p(d^*_p) \geq d^*_p$.

\end{remark}\vspace{-2mm}
Equivalently, optimal gaming entails increasing $d_i$ from the ground truth.
Note that Assumptions~\ref{assump:increasing}-~\ref{assump:diminishing} are more general versions of assumptions placed on rewards/costs in the strategic classification literature (\emph{e.g.},~\citep{hardt2016strategic, dong2018strategic, levanon2021strategic}).

\begin{assumption}[Non-strategic behavior is feasible]
     $d^*_p \in [0, 1]$ is a constant depending solely on $\mathbf{x}_i$.
    \label{assump:non_strat}
\end{assumption}\vspace{-2mm}
Due to Assumption~\ref{assump:non_strat}, ground truth ${d}_p^*$ may vary by agent due to differences in $\mathbf{x}_i$ (\emph{e.g.}, health insurance plans serve populations with varying levels of health).  
\section{Theoretical analysis: finding agents most likely to game}
\label{sec:theory}

We aim to identify agents most likely to game a decision-making model, 
 \emph{i.e.}, agents with the lowest gaming parameters $\lambda_p$. Here, we prove that $\lambda_p$ cannot be point-identified without further assumptions (Section~\ref{subsec:partial}), but ranking $\lambda_p$ is possible via causal effect estimation (Section~\ref{subsec:ranking}).
Detailed proofs are in Appendix~\ref{app:proofs}.

\subsection{Partial identification of the gaming parameter}
\label{subsec:partial}

Here, we show that given our assumptions, $\lambda_p$ is only partially identifiable (cannot be uniquely determined): 
\begin{proposition} 
Define $R'(\cdot)$ as $\frac{dR}{dd^*_p}$ and $c'(\cdot)$ as $\frac{dc}{dd^*_p}$. For any agent $p$, given Assumptions~\ref{assump:shared}-~\ref{assump:diminishing} and an observed $\Delta_p(d^*_p)$, \begin{equation}
   \lambda_p \in \left[ \frac{R'(\Delta_p(d^*_p))}{c'(\Delta_p(d^*_p))}, \infty \right),\label{eq:lower_bound}
\end{equation}
and the bound is sharp.
~\label{prop:non_ident}
\end{proposition}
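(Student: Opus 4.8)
The plan is to exploit the first-order optimality conditions of the (strictly concave) utility-maximization problem in Eq.~\ref{eq:strat_obj}, and then to track how the \emph{unobserved} ground truth $d^*_p$ controls which values of $\lambda_p$ are consistent with a \emph{fixed, observed} decision rate $\Delta_p(d^*_p)$. First I would observe that, under Assumptions~\ref{assump:strict}--\ref{assump:diminishing}, the objective $U(\bar d) = R(\bar d) - \lambda_p\, c(\bar d - d^*_p)$ is strictly concave in $\bar d$ (a concave $R$ plus the strictly concave term $-\lambda_p c$ for $\lambda_p > 0$), so it has a unique maximizer and the stationarity condition is both necessary and sufficient. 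Evaluating stationarity at $\bar d = \Delta_p(d^*_p)$ gives $R'(\Delta_p(d^*_p)) = \lambda_p\, c'(\Delta_p(d^*_p) - d^*_p)$, and hence $\lambda_p = R'(\Delta_p(d^*_p)) / c'(\Delta_p(d^*_p) - d^*_p)$ whenever the optimum is interior.

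The crux is that $\Delta_p(d^*_p)$ is observed but $d^*_p$ is not. By the remark that optimal gaming satisfies $\Delta_p(d^*_p) \ge d^*_p$, together with $d^*_p \ge 0$, and noting that a finite stationary solution with gaming requires $d^*_p < \Delta_p(d^*_p)$, the manipulation $\Delta_p(d^*_p) - d^*_p$ ranges over $(0, \Delta_p(d^*_p)]$. Since $c$ is strictly convex with $c'(0) = 0$ (Assumption~\ref{assump:strict}), $c'$ is strictly increasing and positive there, so $c'(\Delta_p(d^*_p) - d^*_p) \in (0,\, c'(\Delta_p(d^*_p))]$, maximized exactly at $d^*_p = 0$. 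Dividing the positive numerator $R'(\Delta_p(d^*_p)) > 0$ (Assumption~\ref{assump:increasing}) by this quantity immediately yields the lower bound $\lambda_p \ge R'(\Delta_p(d^*_p)) / c'(\Delta_p(d^*_p))$, with the $\infty$ endpoint arising as $d^*_p \uparrow \Delta_p(d^*_p)$ and $c'(\Delta_p(d^*_p) - d^*_p) \downarrow 0$.

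For sharpness the goal is to show this entire interval is the identified set, i.e., that every value in it is observationally equivalent. I would fix the observed rate $\delta = \Delta_p(d^*_p)$ and study the map $d^*_p \mapsto R'(\delta) / c'(\delta - d^*_p)$ on $[0, \delta)$: for each candidate $d^*_p$, setting $\lambda_p$ to this value makes the stationarity condition hold at $\bar d = \delta$, and by strict concavity $\bar d = \delta$ is then the \emph{unique} maximizer, so the pair $(\lambda_p, d^*_p)$ reproduces exactly the observation $\delta$. This map is continuous, attains the lower bound at $d^*_p = 0$, and diverges to $+\infty$ as $d^*_p \uparrow \delta$; by the intermediate value theorem it is surjective onto $[R'(\delta)/c'(\delta), \infty)$, so no value in the interval can be excluded from the observation alone.

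I expect the main obstacle to be the sharpness half rather than the bound itself: the lower bound falls out of the stationarity condition and the monotonicity of $c'$ almost mechanically, whereas establishing that the whole half-line is attainable requires the observational-equivalence construction above and care at the two endpoints — verifying that $d^*_p = 0$ genuinely attains the infimum while the supremum is only approached (never attained, since $d^*_p = \delta$ would force $R'(\delta) = 0$, contradicting Assumption~\ref{assump:increasing}), which is precisely what makes the interval half-open. A secondary technical point is the constraint $\bar d \le 1$: I would restrict attention to interior optima, where stationarity holds with equality, and note that a binding upper constraint only limits the agent's ability to game and therefore does not weaken the lower-bound direction.
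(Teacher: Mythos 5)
Your proposal is correct and follows essentially the same route as the paper's proof: both rest on the first-order condition $\lambda_p = R'(\Delta_p(d^*_p))/c'(\Delta_p(d^*_p) - d^*_p)$, the monotonicity of this expression in $d^*_p$ (from strict convexity of $c$), and evaluation at the endpoints $d^*_p = 0$ and $d^*_p \to \Delta_p(d^*_p)^-$ (the latter via a limit since $c'(0)=0$). Your explicit intermediate-value-theorem argument for sharpness and your handling of the boundary constraint $\bar d \le 1$ are slightly more thorough than the paper's write-up, but they do not change the underlying argument.
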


Intuitively, different values of the unknown $d^*_p$ yield different estimates of $\lambda_p$ consistent with the observed $\Delta_p(d^*_p)$.
Thus, uncertainty in $d^*_p$ results in uncertainty in $\lambda_p$. Equivalently, point-identifying $\lambda_p$ requires \textit{perfect} knowledge of $d^*_p$. 
Thus, without further assumptions, $\lambda_p$ is only partially identifiable. 
The lower bound is attained for $d^*_p = 0$ (all $d_i = 1$ are manipulated), while $\lambda_p \to \infty$ as $\Delta_p(d^*_p) \to d^*_p$ (no manipulation). Intuitively, increases in $\lambda_p$ further disincentivize increases to  $\Delta_p(d^*_p)$, such that $\Delta_p(d^*_p)$  gets closer to $d^*_p$.

A na{\"i}ve approach to gaming detection would be to rank individuals using the above bound. To see why this is problematic, consider an Agent 1 ($\lambda_1 = 10$) and Agent 2 ($\lambda_2 = 30$), and let $R(x) = x$ and $c(x) = x^2$.
Suppose Agent 1 is a health insurance plan serving a relatively healthy population ($d^*_1 = 0.05$), while Agent 2 serves a population with a higher burden of illness ($d^*_2 = 0.12$). Via Eq.~\ref{eq:strat_obj}, we have $\Delta_1(d^*_1) = 0.10$, while $\Delta_2(d^*_2) \approx 0.14$. Substitution into Eq.~\ref{eq:lower_bound} yields $\lambda_1 \geq 5$ and $\lambda_2 \geq 3.66$, flipping the true ranking of $\lambda_p$.
Thus, acting on this bound may incorrectly penalize agents when a high $\Delta_p(d^*_p)$ is appropriate; \emph{e.g.}, insurance plans serving sicker populations. 

\subsection{Identifying a ranking of the gaming parameter}
\label{subsec:ranking}

Since we showed that point-identifying $\lambda_p$ is impossible without further assumptions, we relax gaming detection to a \emph{ranking} problem. Intuitively, differences in agent behavior under similar conditions may indicate different gaming capacities, from which the proposed approach follows.

\smallskip\noindent\textbf{Ranking $\lambda_p$ by estimating counterfactuals.} Recall that we aim to find agents with the lowest $\lambda_p$. Thus, it suffices to \emph{rank} agents by $\lambda_p$, which can be done as follows:

\begin{theorem}
    Under Assumptions~\ref{assump:shared}-~\ref{assump:non_strat}, and $\Delta_{p'}(d^*_p)$ defined as 
    \begin{equation}
        \Delta_{p'}(d^*_p) \triangleq \underset{\bar{d}\in [0, 1]}{\arg\max}\; R(\bar{d}) - \lambda_{p'} c(\bar{d} - d^*_p),\label{eq:strat_cf}
    \end{equation}
    we have that $\Delta_{p}(d^*_p)< \Delta_{p'}(d^*_p)$ if and only if $\lambda_p > \lambda_{p'}$.
    \label{thrm:main}
\end{theorem}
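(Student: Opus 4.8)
The plan is to show that, for a fixed ground truth $d^*_p$, the optimal gamed rate is a strictly decreasing function of the cost-scaling parameter $\lambda$; the stated ``if and only if'' then follows immediately, since strict monotonicity makes the ordering of the $\Delta$'s the exact reverse of the ordering of the $\lambda$'s, delivering both directions at once.

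First I would record that the objective $U_\lambda(\bar d) \triangleq R(\bar d) - \lambda c(\bar d - d^*_p)$ is strictly concave in $\bar d$: $R$ is concave (Assumption~\ref{assump:diminishing}) and $-\lambda c$ is strictly concave since $c$ is strictly convex (Assumption~\ref{assump:strict}) and $\lambda > 0$. Hence the maximizer is unique and well defined for each $\lambda$. Next I would argue the optimizer strictly exceeds $d^*_p$: at $\bar d = d^*_p$ the derivative is $U_\lambda'(d^*_p) = R'(d^*_p) - \lambda c'(0) = R'(d^*_p) > 0$, using $c'(0)=0$ (Assumption~\ref{assump:strict}) and $R' > 0$ (Assumption~\ref{assump:increasing}). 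Together with the strict monotonicity of $c'$ (which has $c'(0)=0$), this gives $c'(\Delta_\lambda - d^*_p) > 0$ at any interior optimum, which is characterized by the first-order condition $R'(\Delta_\lambda) = \lambda\, c'(\Delta_\lambda - d^*_p)$.

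The core step is a comparative-statics comparison of the two first-order conditions, carried out by contradiction. Writing $\Delta_p, \Delta_{p'}$ for the two optima and assuming $\lambda_p > \lambda_{p'}$, suppose toward contradiction that $\Delta_p \ge \Delta_{p'}$. Concavity of $R$ gives $R'(\Delta_p) \le R'(\Delta_{p'})$, while strict monotonicity of $c'$ and $\Delta_p \ge \Delta_{p'} > d^*_p$ give $c'(\Delta_p - d^*_p) \ge c'(\Delta_{p'} - d^*_p) > 0$; combined with $\lambda_p > \lambda_{p'}$ this yields $\lambda_p c'(\Delta_p - d^*_p) > \lambda_{p'} c'(\Delta_{p'} - d^*_p)$. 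Substituting both first-order conditions turns this into $R'(\Delta_p) > R'(\Delta_{p'})$, contradicting $R'(\Delta_p) \le R'(\Delta_{p'})$. Hence $\Delta_p < \Delta_{p'}$, proving one direction; applying the same argument with the roles of $p, p'$ swapped, together with the case $\lambda_p = \lambda_{p'}$ (which forces $\Delta_p = \Delta_{p'}$ by uniqueness), establishes the converse and completes the equivalence. As an alternative I could differentiate the first-order condition implicitly to obtain $\frac{d\Delta}{d\lambda} = c'(\Delta - d^*_p)\big/\big(R''(\Delta) - \lambda c''(\Delta - d^*_p)\big) < 0$, since the numerator is positive and the denominator is negative by Assumptions~\ref{assump:strict}--\ref{assump:diminishing}; this makes the monotonicity explicit but requires justifying differentiability of $\Delta(\lambda)$.

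The main obstacle I anticipate is handling boundary/corner solutions, where the first-order condition holds only as an inequality. If both agents' optima pin to the upper endpoint $\bar d = 1$, then $\Delta_p = \Delta_{p'} = 1$ even though $\lambda_p \ne \lambda_{p'}$, which would break the strict equivalence. The cleanest remedy is to restrict attention to the interior regime (consistent with the remark that the framework applies on ``arbitrary intervals''), or to argue that the boundary does not bind --- e.g., that marginal cost dominates marginal reward near the endpoint so that $U_\lambda'(1) < 0$ --- after which the first-order-condition comparison above applies verbatim. I would therefore state the equivalence for interior optima and note that corner cases either reduce to this argument or are excluded by the modeling assumptions.
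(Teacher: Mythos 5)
Your proof is correct and follows essentially the same route as the paper's: both arguments compare the two first-order conditions $R'(\Delta) = \lambda\, c'(\Delta - d^*_p)$ and derive the ordering from the monotonicity of $R'$ (concavity of $R$) and the strict monotonicity of $c'$, with the converse obtained by contradiction/trichotomy. Your explicit treatment of the $\lambda_p = \lambda_{p'}$ case and of corner solutions at $\bar d = 1$ is a point of care the paper's proof glosses over (it implicitly assumes interior optima), but it does not change the substance of the argument.
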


Theorem~\ref{thrm:main} tells us that estimation of $\Delta_{p}(d^*_p)$ and $\Delta_{p'}(d^*_p)$ can be used to rank $\lambda_p$ vs. $\lambda_{p'}$. 
Eq.~\ref{eq:strat_cf} differs from Eq.~\ref{eq:strat_obj}: while $R$, $c$, and $d_p^*$ are the same, $\lambda_p$ changes to $\lambda_{p'}$. 
While subtle, the distinction is key to gaming detection: 
$\Delta_{p'}(d^*_p)$ denotes what agent $p'$ \textit{would have} done in a population with ground truth $d^*_p$, as opposed to what agent $p'$ \emph{actually did} in their population (ground truth $d^*_{p'}$). 

To build a ranking strategy, first consider ranking two agents $p$ and $p'$. Define $\mathcal{D}_{p, p'} \triangleq \{(\mathbf{x}_i, d_i, p_i) \mid i = 1, \dots, n \;\text{and}\; p_i \in \{p, p'\}\}$, where $p_i$ is an indicator for the agent that observed example $i$. Following the Neyman-Rubin potential outcomes framework~\citep{holland1986statistics}, let $d_i(p)$ be the value of $d_i$ if $p_i$ was set to $p$ (\emph{i.e.} had agent $p$ been \emph{compelled} to make a decision).
Such variables are called \emph{counterfactuals}.
Figure~\ref{tab:example_data} (left) shows a toy dataset with counterfactuals $d_i(p), d_i(p')$, where ``?'' are unobserved decisions $d_i$. 
Dropping unobserved data, the average $d_i(p)$ is $\Delta_p(d^*_p)$ by definition. Thus, if $d_i(p)$ and $d_i(p')$ are fully observed, one could estimate $\Delta_p(d^*_p)$ and $\Delta_{p'}(d^*_p)$ as follows:

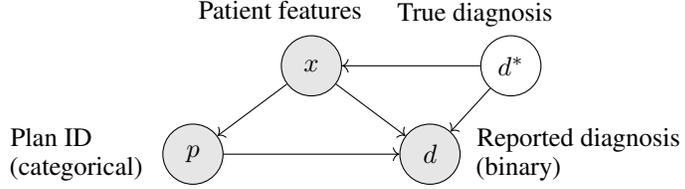
\begin{figure}[t]
   \centering
   \begin{minipage}{0.3\textwidth}
   \setlength{\tabcolsep}{5pt}
    \begin{tabular}{ccccc}
         $x_i$ & $d_i$ & $p_i$ &$d_i(p)$&$ d_i(p')$\\
        \midrule
         0.2 & 1 & $p$ & 1 & ? \\
          -0.3 & 0 & $p$ & 0 & ? \\
         -0.1 & 1 & $p'$ & ? & 1 \\
         0.4 & 0 & $p'$ & ? & 0 \\
         \bottomrule
    \end{tabular}
    \end{minipage}\hfill
    \begin{minipage}{0.65\textwidth}
    \begin{tikzpicture}[var/.style={draw,circle,inner sep=0pt,minimum size=0.8cm}]
    \node (x) [var, fill=black!10] {$x$};
    \node (p) [var, below left=0.6cm and 1cm of x, fill=black!10] {$p$};
    \node (d) [var, below right=0.6cm and 1cm of x, fill=black!10] {$d$};
    \node (d_star) [var, above right=0.6cm and 0.5cm of d, fill=white] {$d^*$};  
    
    \path[->]
        (x) edge (d)
        (x) edge (p)
        (d_star) edge (x)
        (d_star) edge (d)
        (p) edge (d);

    \node at (p) [left=-0.2cm] {\parbox{2.5cm}{Plan ID\\(categorical)}};
    \node at (d) [right=0.5cm] {\parbox{3cm}{Reported diagnosis\\(binary)}};
    \node at (x) [above=0.5cm] {\parbox{3cm}{Patient features}};
    \node at (d_star) [above=0.4cm] {\parbox{3cm}{True diagnosis}};

    \end{tikzpicture}
    \end{minipage}
    \vspace{-2mm}
    \caption{\textbf{Left:} Toy dataset with observed factual outcomes $d_i(p)$ and $d_i(p')$. ``?'' denotes missing counterfactual outcomes. \textbf{Right:} Causal graph for gaming detection with confounders $\mathbf{x}$, agent indicator $p$, ground truth diagnosis $d^*$, and agent decision $d$. }
    \label{tab:example_data}
\end{figure}
\begin{figure}[t]
    \centering
    \includegraphics[width=\linewidth]{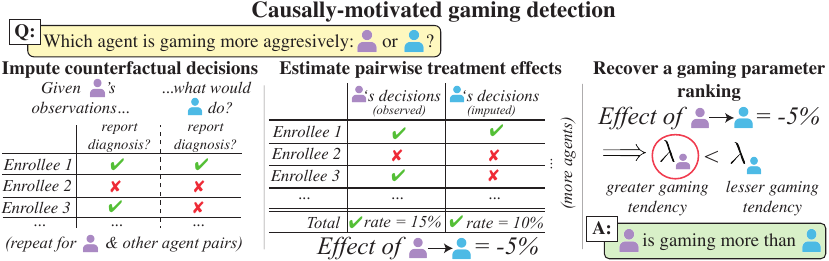}
    \caption{Causally-motivated gaming detection. \textbf{Left:} First, we impute counterfactual decisions for each agent. \textbf{Middle:} The imputed counterfactuals yield average treatment effects (ATEs) across \emph{pairs} of agents. \textbf{Right:} Using ATE estimates to rank agents yields a ranking of the gaming parameter $\lambda_p$. We show one direction of comparison across agents for simplicity. In practice, we impute decisions for both directions of comparison and average (given blue agent's observations, impute purple agent's decisions).}
    \label{fig:causal_gaming}
    \vspace{-4mm}
\end{figure}

\begin{equation}
    \hat{\Delta}_p(d^*_p) = \frac{1}{n_p} \sum_{\substack{(\mathbf{x}_i, d_i, p_i) \in \mathcal{D}_{p, p'} \\ p_i=p}}  d_i(p) \quad \quad\hat{\Delta}_{p'}(d^*_p) = \frac{1}{n_p} \sum_{\substack{(\mathbf{x}_i, d_i, p_i) \in \mathcal{D}_{p, p'} \\ p_i =p}}  d_i(p') \label{eq:fci}
\end{equation}
where $n_p$ is the number of observations by agent $p$, and use these estimates to rank $\lambda_p$ as per Theorem~\ref{thrm:main}.
However, only one of $d_i(p)$ or $d_i(p')$ is ever observed. The need to estimate a counterfactual (namely, $\Delta_{p'}(d^*_p)$) suggests a causal inference approach, which proceeds assuming the following~\cite{pearl2009causality}: 

\begin{assumption}[Conditional exchangeability]
For all $i$, $d_i(p_i) \perp\!\!\!\perp p_i \mid \mathbf{x}_i$,\label{assump:cond_xch}
where $d_i(p_i)$ is the potential outcome of $d_i$ under agent $p_i$.
\end{assumption}\vspace{-1mm}
\begin{assumption}[Consistency]
    For all $i$, $d_i(p_i) = d_i$.\label{assump:consistency}
\end{assumption}\vspace{-1mm}
\begin{assumption}[Positivity/overlap]
    For any agent $p$ and $\mathbf{x} \in \mathcal{X}$, $0 < \mathbb{P}[p \mid \mathbf{x}] < 1$.
    \label{assump:positivity}
\end{assumption}\vspace{-2mm}
Via Assumptions~\ref{assump:cond_xch}-~\ref{assump:positivity}, we have that  $\mathbb{E}[d_i(p) \mid \mathbf{x}_i] =  \mathbb{E}[d_i \mid \mathbf{x}_i, p]$ and $\mathbb{E}[d_i(p') \mid \mathbf{x}_i] = \mathbb{E}[d_i \mid \mathbf{x}_i, p']$~\citep{imbens2015causal}.
Hence, the causal effect is \textit{identifiable}. 
This estimand corresponds to a three-variable causal graph, where $\mathbf{x}_i$ are confounders, the agent indicator $p_i$ is a ``treatment,'' and the agent's decision $d_i$ is the outcome (Fig.~\ref{tab:example_data}, right).\footnote{Note that, under the causal graph of Figure ~\ref{tab:example_data} (right), the non-identifiability of $\lambda_p$ is immediate from the $d$-separation properties of the graph; see Appendix~\ref{app:proofs} for further comment.} We proceed by estimating the  effect of ``swapping'' agents:

\begin{corollary}
Define $\tau(p, p') \triangleq \mathbb{E}_{\mathbf{x}_i}[\mathbb{E}[d_i(p) \mid \mathbf{x}_i]] - \mathbb{E}_{\mathbf{x}_i}[\mathbb{E}[d_i(p') \mid \mathbf{x}_i]]$.
Then, given Assumptions~\ref{assump:shared}-~\ref{assump:positivity}, $\tau(p, p') > 0$ if and only if $\lambda_p < \lambda_{p'}$.
\label{corr:causal}
\end{corollary}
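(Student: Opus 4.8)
The plan is to reduce the corollary to Theorem~\ref{thrm:main} by showing that the causal estimand $\tau(p,p')$ is exactly the difference $\Delta_p(d^*_p) - \Delta_{p'}(d^*_p)$ of the factual and counterfactual gamed decision rates appearing in that theorem. Once this identity is in hand, the biconditional is immediate: Theorem~\ref{thrm:main} gives $\Delta_p(d^*_p) < \Delta_{p'}(d^*_p) \iff \lambda_p > \lambda_{p'}$, so by symmetry $\Delta_p(d^*_p) > \Delta_{p'}(d^*_p) \iff \lambda_p < \lambda_{p'}$, which is precisely $\tau(p,p') > 0 \iff \lambda_p < \lambda_{p'}$.

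First I would invoke the identification result already granted in the text. Under Assumptions~\ref{assump:cond_xch}--\ref{assump:positivity} we have $\mathbb{E}[d_i(p) \mid \mathbf{x}_i] = \mathbb{E}[d_i \mid \mathbf{x}_i, p]$ and $\mathbb{E}[d_i(p') \mid \mathbf{x}_i] = \mathbb{E}[d_i \mid \mathbf{x}_i, p']$, so that
\begin{equation}
\tau(p,p') = \mathbb{E}_{\mathbf{x}_i}\left[ \mathbb{E}[d_i \mid \mathbf{x}_i, p] \right] - \mathbb{E}_{\mathbf{x}_i}\left[ \mathbb{E}[d_i \mid \mathbf{x}_i, p'] \right],
\end{equation}
where the outer expectation is over the feature distribution of agent $p$'s population, matching the averaging in Eq.~\ref{eq:fci}. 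Each term is now a functional of observable conditionals rather than of unobserved counterfactuals.

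The crux — and the step I expect to require the most care — is identifying these two marginalized conditionals with $\Delta_p(d^*_p)$ and $\Delta_{p'}(d^*_p)$ as defined in Eqs.~\ref{eq:strat_obj} and~\ref{eq:strat_cf}. For the first term, averaging agent $p$'s own factual decision rate over its population recovers $P(d_i = 1 \mid p) = \Delta_p(d^*_p)$ by definition. For the second term I would argue that $\mathbb{E}_{\mathbf{x}_i}[\mathbb{E}[d_i \mid \mathbf{x}_i, p']]$ is the rate agent $p'$ would produce were it compelled to act on agent $p$'s population: by Assumption~\ref{assump:non_strat} the ground truth $d^*$ is a deterministic function of $\mathbf{x}_i$ whose population average over $p$'s covariate distribution is $d_p^*$, and agent $p'$'s decisions are governed by the same utility maximization with only $\lambda_{p'}$ swapped in, so this marginal counterfactual rate equals $\Delta_{p'}(d^*_p)$. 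This is exactly what aligns the potential-outcomes ``swap an agent'' operation with the counterfactual utility-maximization of Eq.~\ref{eq:strat_cf}.

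Finally, substituting both identities gives $\tau(p,p') = \Delta_p(d^*_p) - \Delta_{p'}(d^*_p)$, and applying Theorem~\ref{thrm:main} closes the argument. The one genuine subtlety lies in the second identification: one must check that the aggregate formulation of gaming (phrased in terms of $\bar{d}$) is consistent with marginalizing per-example potential outcomes, i.e., that passing the population average through the $\arg\max$ is legitimate. This is where Assumptions~\ref{assump:strict}--\ref{assump:diminishing}, which guarantee a unique and monotone optimizer, do the real work; the causal identification (Assumptions~\ref{assump:cond_xch}--\ref{assump:positivity}) and Theorem~\ref{thrm:main} then supply only bookkeeping.
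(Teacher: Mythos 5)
Your proposal is correct and follows essentially the same route as the paper: identify $\tau(p,p')$ with $\Delta_p(d^*_p) - \Delta_{p'}(d^*_p)$ via the identification result and the tower property (the paper writes $\mathbb{E}[\mathbb{E}[d_i \mid p, x_i]] = \mathbb{E}[d_i \mid p] = \mathbb{P}[d_i = 1 \mid p] = \Delta_p(d^*_p)$ and treats the $p'$ term symmetrically), then apply Theorem~\ref{thrm:main}. You are in fact somewhat more careful than the paper about the counterfactual term --- noting that the outer expectation must be taken over agent $p$'s covariate distribution for $\mathbb{E}_{\mathbf{x}_i}[\mathbb{E}[d_i \mid \mathbf{x}_i, p']]$ to equal $\Delta_{p'}(d^*_p)$, and flagging the aggregation-through-the-$\arg\max$ subtlety --- but these are refinements of the same argument, not a different one.
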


Since $\mathbb{E}[d_i(p) \mid \mathbf{x}_i] = \mathbb{E}[d_i \mid \mathbf{x}_i, p]$, it is an unbiased estimator of $\Delta_p(d^*_p)$ by definition, and substitution into Theorem~\ref{thrm:main} yields the result. Thus, one can rank $\lambda_p$ by estimating the effect of switching from agent $p$ to $p'$ on the observed decision rate for \textit{all} pairs of agents $p, p'$.
Each effect estimate is a pairwise comparison of agents, from which a  ranking of $\lambda_{(\cdot)}$ follows. 
We summarize causally-motivated gaming detection in Fig.~\ref{fig:causal_gaming}, with pseudocode in Fig.~\ref{fig:alg}. \begin{figure}[t]
\begin{minipage}[t]{.46\linewidth}
\begin{algorithm}[H]
\caption*{\textbf{a)} Causally-motivated gaming detection.}
\textbf{Input:} $\mathcal{D} = \{(\mathbf{x}_i, d_i, p_i)\}_{i=1}^N$ and a list of $P$ agents $\mathbf{p}$\\
\textbf{Output:} sorted list of $P$ agents 
\begin{algorithmic}
\State $\hat{\tau} \leftarrow$ fitted causal effect estimator using $\mathcal{D}$ 
\State $\mathbf{T} \leftarrow$ empty $P$-by-$P$ matrix
\ForAll{$p_i, p_j$ where $i < j$}
    \State $\mathbf{T}[i, j] \leftarrow \hat{\tau}(p_i, p_j)$
\EndFor  
\State $\mathbf{\Lambda} \leftarrow$ \text{sort} $\mathbf{p}$, using $\mathbf{T}$ to make comparisons
\State \textbf{return} $\mathbf{\Lambda}$
\end{algorithmic}
\end{algorithm}
  \end{minipage}\hfill
  \begin{minipage}[t]{.53\linewidth}
    \definecolor{codeblue}{rgb}{0.25,0.5,0.5}
\definecolor{dkgreen}{rgb}{0,0.6,0}
\definecolor{gray}{rgb}{0.5,0.5,0.5}
\definecolor{mauve}{rgb}{117, 0, 92}
\lstset{
  language=Python,
  backgroundcolor=\color{white},
  basicstyle=\fontsize{8.5pt}{8.5pt}\ttfamily\selectfont,
  columns=fullflexible,
  breaklines=true,
  captionpos=b,
  commentstyle=\fontsize{8.5pt}{8.5pt}\color{gray},
  keywordstyle=\fontsize{8.5pt}{8.5pt}\color{mauve},
}
 \begin{algorithm}[H]
 \caption*{\textbf{b)} Python-style pseudocode.}
 \vspace{-1.2ex}
 \begin{lstlisting}[language=python]
 train_data, test_data = split(dataset)
 model = fit_causal_estimator(train_data)
 comparisons = {}
 for p_i, p_j in agents where p_i != p_j:
    # counterfactual est., plugging in agent IDs
    agent_i_cf = model.predict(test_data, p_i)
    agent_j_cf = model.predict(test_data, p_j)
    comparisons[(p_i, p_j)] = agent_i_cf - agent_j_cf
    
 ranking = sort(agents, comparisons) # use the comparisons dictionary as a lookup table
 return ranking
 \end{lstlisting}
 \vspace{-1.2ex}
 \end{algorithm}
  \end{minipage}\vspace{-2mm}
  \captionof{figure}{Pseudocode for causally-motivated gaming detection. Causal effect estimators take pairs of agents ($p, p'$) and their observations $\mathbf{x}^{(\cdot)}$ as inputs, and output the mean difference in predicted decision rate.}
  \vspace{-6mm}
  \label{fig:alg}
\end{figure}

\smallskip\noindent\textbf{Obtaining a well-ordered ranking.} 
Since we aim to rank $\lambda_p$, our chosen estimator should yield a well-ordered ranking. Via Corollary~\ref{corr:causal}, the oracle treatment effect $\tau$ suffices, but $\tau$ must generally be estimated. We show that a ``sufficiently'' accurate estimate $\hat{\tau}$ also yields the desired result:
\begin{proposition}
\label{prop:well_ordered}
Let $\tau(\cdot)$ be the oracle treatment effect function as defined in Corollary~\ref{corr:causal}, and $\hat{\tau}$ be its sample estimate. Given Assumptions~\ref{assump:shared}-~\ref{assump:positivity}, for any $\varepsilon > 0$, if  $\sup \; |\hat{\tau}(p, p') - \tau(p, p')| \leq \varepsilon$, then, for all $p, p'$ such that $\underset{p, p'}{\min}\; |\tau(p, p')| > \varepsilon$, $\hat{\tau}(p, p') > 0$ if and only if $\lambda_p <  \lambda_{p'}$. \end{proposition}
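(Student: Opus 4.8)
The plan is to reduce the statement to a sign-agreement fact and then close it with an elementary perturbation argument. Corollary~\ref{corr:causal} already establishes that the oracle effect satisfies $\tau(p, p') > 0$ if and only if $\lambda_p < \lambda_{p'}$, so the ordering induced by $\tau$ is exactly the ordering by increasing $\lambda_{(\cdot)}$. Consequently, to prove the biconditional for $\hat{\tau}$ it suffices to show that, on the pairs under consideration, $\hat{\tau}(p, p')$ and $\tau(p, p')$ carry the same sign; chaining this with Corollary~\ref{corr:causal} then yields $\hat{\tau}(p, p') > 0 \iff \lambda_p < \lambda_{p'}$. In this sense the substantive identification work is already done in the corollary, and all that remains is to certify robustness of the sign of $\tau$ under an estimation error bounded uniformly by $\varepsilon$.

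The core step is a triangle-inequality argument. The global hypothesis $\min_{p, p'} |\tau(p, p')| > \varepsilon$ implies that every pair satisfies $|\tau(p, p')| > \varepsilon$, so in particular $\tau(p,p') \neq 0$; I would fix such a pair and split into two cases. If $\tau(p, p') > \varepsilon$, then the uniform bound $|\hat{\tau}(p, p') - \tau(p, p')| \le \varepsilon$ gives $\hat{\tau}(p, p') \ge \tau(p, p') - \varepsilon > 0$. If instead $\tau(p, p') < -\varepsilon$, the same bound gives $\hat{\tau}(p, p') \le \tau(p, p') + \varepsilon < 0$. Hence $\hat{\tau}(p, p') > 0$ holds exactly when $\tau(p, p') > 0$. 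For the converse direction of the biconditional, note that $\hat{\tau}(p, p') > 0$ rules out the case $\tau(p, p') < -\varepsilon$ (which would force $\hat{\tau}(p, p') < 0$), so under the separation hypothesis $\tau(p, p') > \varepsilon > 0$ must hold. Combining either direction with Corollary~\ref{corr:causal} completes the equivalence.

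The perturbation step itself poses no genuine technical obstacle, so the delicate point is conceptual rather than computational: the role of the separation margin $\min_{p, p'} |\tau(p, p')| > \varepsilon$. This hypothesis is precisely what prevents near-ties—pairs whose true gap is comparable to the estimation error—from having their order reversed by $\hat{\tau}$, and it is necessary, since an arbitrarily small true gap cannot survive an $\varepsilon$-sized perturbation. I would also flag the point implicit in the \emph{well-ordered} framing: because every pair is ranked by $\hat{\tau}$ in agreement with the true $\lambda_{(\cdot)}$-order, which is itself a total order, the pairwise comparisons produced by $\hat{\tau}$ are mutually consistent and cycle-free, so sorting the agents with these comparisons (as in Figure~\ref{fig:alg}) recovers the correct total order. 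The most subtle part to articulate cleanly is thus that sign-consistency on \emph{all} well-separated pairs, rather than correctness of isolated comparisons, is what guarantees a coherent global ranking.
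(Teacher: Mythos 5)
Your proposal is correct and follows essentially the same route as the paper: both reduce the claim to sign-agreement between $\hat{\tau}(p,p')$ and $\tau(p,p')$ on the well-separated pairs and then invoke Corollary~\ref{corr:causal}, with your direct triangle-inequality case split being just a rephrasing of the paper's proof by contradiction. The closing remarks on cycle-free global rankings are sound but go beyond what the proposition asserts.
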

The result is immediate: sufficiently low estimation error in $\hat{\tau}$ (\emph{i.e.}, $\leq \varepsilon$) cannot ``flip'' any pairwise rankings where $\tau > \varepsilon$.
Thus, any consistent estimate of $\tau$ yields (asymptotically) a well-ordering of $\lambda_p$. We defer to past asymptotic analyses of causal effect estimation for further discussion~\citep{curth2021nonparametric, nie2021quasi}. 
\section{Empirical results \& discussion}
\label{sec:results}

We aim to demonstrate that causal inference can be used for gaming detection.
First, we discuss our setup (Section~\ref{subsec:setup}). 
Then, we show in synthetic data (Section~\ref{subsec:synth}) that causal methods require fewer audits than existing non-causal methods to catch the worst offenders. 
Finally, in real-world case study (Section~\ref{subsec:medicare}), we find that causal methods yield rankings correlated with suspected drivers of gaming. 

\subsection{Setup}
\label{subsec:setup}

We describe the datasets, evaluation method, and gaming detection methods under consideration.

\smallskip\noindent\textbf{Datasets.} 
In real datasets, ground truth gaming rankings are often unavailable. Thus, we validate our framework in a \textbf{synthetic dataset}. 
We hand-select 20 $\lambda_p$ values (one per agent; see Appendix~\ref{app:synth} for raw $\lambda_p$ values) and simulate confounding by generating covariates from agent-specific Gaussians with means $\bm{\mu}_p \in \mathbb{R}^2$. To control confounding strength, we set $\bm{\mu}_p = g(\log(\lambda_p))$, where $g$ is an affine transformation such that  the range of $\bm{\mu}_p$ across agents equals a chosen range parameter $R_\mu \in \{0, 0.1, \dots, 1.0\}$. . 
Smaller $R_\mu$ implies less confounding, since $\bm{\mu}_{(\cdot)}$ varies less across agents.
We generate 500 observations $\mathbf{x}^{(i)} \in \mathbb{R}^2$ and ground-truth $d^{*(i)}$ per agent via
\begin{align*}
    \mathbf{x}^{(i)} &\sim \mathcal{N}(\bm{\mu}_{p^{(i)}}, \sigma^2\mathbf{I}_{2 \times 2}) \quad \quad\quad d^{*(i)} \sim Ber(\alpha^{*(i)}); \quad \alpha^{*(i)} = \sigma(\mathbf{w}^\top \mathbf{x}^{(i)} + b), 
\end{align*}
where $\mathbf{w} \sim \mathcal{U}(0, 1)^2$, such that increasing $\mathbf{x}$ increases $\alpha^{*(i)}$ (and thus $P(d^{(i)}=1)$), $b$ is chosen such that $\alpha^{*(i)}$ for the mean $\mathbf{x}$ is  $\approx5\%$, and $\sigma^2 = 1$.\footnote{For ease of implementation, $\mathbf{x}^{(i)}$ is generated conditioned on $p^{(i)}$. This remains consistent with the causal DAG (Fig.~\ref{tab:example_data}, right), since the DAG factorizes as $P(D \mid P, X)P(P \mid X)P(X)$, or equivalently, $P(D \mid P, X)P(X \mid P)P(P)$, as in our data-generation process.} We simulate gamed agent decisions $d^{(i)}$  as follows:
\begin{align*}
    d^{(i)} &\sim Ber(\alpha_p^{(i)});\quad \alpha_p^{(i)} = \underset{\tilde{d}_p \in [0, 1]}{\arg\max}\; \log(\tilde{d}_p) - \lambda_p (\tilde{d}_p - d^{*(i)})^2.
\end{align*}
Recall that the decisions $d^{(i)}$ are also agent inputs to a payout model. 
We generate 10 datasets (each $N=10,000$; 20 agents $\times$ 500 observations) for all 11 levels of confounding (as measured by the range of means $R_\mu$). Causal inference assumptions hold in the synthetic data: all confounders $\mathbf{x}^{(i)}$ are observed (Assump.~\ref{assump:cond_xch}), consistency holds by construction (Assump.~\ref{assump:consistency}), and overlap holds since all $\mathbf{x}^{(i)} \mid p$ are supported on $\mathbb{R}^2$ (Assump.~\ref{assump:positivity}). 
Full synthetic data generation details are in Appendix~\ref{app:synth}.

To benchmark causal effect estimation in a more realistic setting, we apply causal methods to gaming detection in \textbf{U.S. Medicare} claims.  
Medicare is the public health insurance system in the U.S. for residents aged 65 and over. 
Since private insurance claims data is not widely available, we conduct a gaming case study in healthcare providers. 
In Medicare, the U.S. government pays healthcare providers on a per-service basis~\citep{medicareYourMedicare}. Thus, 
 providers may be incentivized to label enrollees with as many diagnoses as possible to secure extra payment from the government.
We select a 0.2\% sample of all Medicare enrollees with a claim in 2018; \emph{i.e.}, those who utilized a service covered directly by Medicare ($N=37,893$). 
We use demographic information and diagnoses in 2018 as covariates and select the rate of uncomplicated diabetes diagnosis in 2019 as the outcome. 
Given differences in healthcare policy and access across U.S. states, we pool data at the U.S. state level and treat each state as an ``agent.'' Additional cohort details are in Appendix~\ref{app:ffs}. 

\smallskip\noindent\textbf{Evaluating rankings.} Given an observational dataset of the form $\{(\mathbf{x}_i, d_i, p_i)\}_{i=1}^N$, with covariates $\mathbf{x}_i$, observed decisions $d_i$, and agent indicators $p_i$, gaming detection algorithms output an ordinal agent ranking in terms of the gaming parameter $\lambda_p$. 
We aim to measure the efficiency of a predicted ranking given some level of resources committed by a decision-maker (\emph{i.e.}, \# of agents audited). 

Ground truth rankings are available in synthetic data. 
Thus, we measure the top-5 \textbf{sensitivity} at $k$ ($S_k$), the \% of top-5 worst offenders in the predicted top-$k$, 
and the \textbf{discounted cumulative gain (DCG)} at $k$, a weighted sum of ground-truth ``relevance scores'' for the top-$k$ predicted agents, across audit intensities $k \in \{1, \dots, 20\}$.
For a $K$-agent dataset, we define the relevance score as $K+1$ minus the ground-truth rank (\emph{e.g.}, true rank 1 = relevance $K$, true rank 2 = relevance $K-1$, etc.). Concretely, for $r_i$ defined as the $i$th ranked agent in a predicted ranking, and $\text{rank}(\cdot)$ as the function returning the \textit{ground-truth} ordinal rank with respect to $\lambda_p$, our ranking evaluation metrics are computed as follows:
\begin{equation}
    S_k \triangleq \frac{1}{5} \sum_{i=1}^k \mathbbm{1}[\text{rank}(r_i) \leq 5] \quad\quad \text{DCG}_k  \triangleq \sum_{i=1}^k \frac{K - \text{rank}(r_i) }{\log_2(i+1)}.\label{eq:metrics}
\end{equation}
Note that sensitivity is an all-or-nothing measure of audit quality given a fixed audit intensity $k$. However, DCG rewards higher predicted rankings for top-$k$ worst offenders, regardless of the absolute ranking position.
Furthermore, DCG weights decrease with predicted rank (Eq.~\ref{eq:metrics}), which prioritizes correctly ranking the worst offenders over correctly ranking agents unlikely to game.
To summarize audit efficiency across $k$, we also report \textbf{area under the top-5 sensitivity curve (AUSC)} across $k$.\footnote{AUSC is similar to the area under the ROC curve (AUROC) with the top-$k$ worst offenders defined as the ``positive class,'' except that the $x$-axis is the \# of audits rather than false positive rate. Unlike AUROC, random performance is less than 0.5, but tends to 0.5 as the number of agents $K\to \infty$ (Appendix \ref{app:ausc}).}

In contrast, ground truth gaming rankings are unavailable in the Medicare cohort. As an exploratory analysis, we compare an estimated gaming ranking to 104 state-level healthcare statistics from the 2003-2017 National Neighborhood Data Archive~\cite{khan2020national} and 2018 Medicare Provider of Service files~\cite{pos} relating to healthcare access and hospital information (\emph{e.g.}, ownership and size).
We report the top five statistics most positively and negatively correlated with our predicted rankings in terms of \textbf{Spearman rank-correlation}.
State-level summary statistics used are enumerated in Appendix~\ref{app:ffs}.

\smallskip\noindent\textbf{Models.} To demonstrate the utility of causal effect estimation for gaming detection, we compare non-causal baselines to causal effect estimators. 
Non-causal approaches include a payout-only ranking (based on $P(d_i = 1)$ per agent) 
and random ranking.
We also compare to existing approaches in anomaly detection, which do not make causal assumptions, but assume that gamed decisions are outlier-like.
We test $k$-nearest neighbor outlier detection (KNN) ~\citep{ramaswamy2000efficient}, empirical-cumulative-distribution-based outlier detection (ECOD) ~\citep{li2022ecod}, and deep isolation forests (DIF) ~\citep{xu2023deep}.
These methods use $(\mathbf{x}_i, d_i)$ as inputs to an ``anomaly score'' model. We use average within-agent anomaly scores as a ranking.
We discuss other works in algorithmic anomaly/fraud detection 
in Appendix~\ref{app:related}.

We implement the following causal effect estimators.
\textbf{PSM} fits a propensity score model to match points in one agent's population to its nearest neighbor in the other agent's population with respect to propensity score estimates~\cite{rosenbaum1983central}. 
The \textbf{S-learner} trains one outcome prediction model for all agents, while the \textbf{T-learner} trains one model per agent~\cite{kunzel2019metalearners}.
\textbf{DragonNet} jointly models the outcome (one prediction ``head'' per agent) and propensity score to control for confounding~\citep{shi2019adapting}.
\textbf{S+IPW} fits the S-learner with estimated sample weights that reweight the observed distribution to resemble an unconfounded distribution (randomized treatment assignment)~\cite{rosenbaum1983central}. 
The \textbf{R-learner} fits an outcome and propensity model, then regresses the residuals on one another to obtain a final unconfounded estimator~\cite{nie2021quasi}.
Hyperparameters for all baselines and causal effect estimators are in Appendix~\ref{app:training}.

\smallskip\noindent\textbf{Implementation details.} We use neural networks for all modeling (causal effect estimators + DIF). 
We use one-hot encoding for treatments (agent indicators). 
Matching across pairs of agents occurred without replacement (one-to-one), dropping unmatched individuals. 
We use generalizations of IPW and R-learners to multiple treatments, namely permutation weighting~\cite{arbour2021permutation} and the structured intervention network~\citep{kaddour2021causal}, respectively.
We perform a 7:3 dataset train-test split, training all models on the larger split. 
All rankings are computed on the test split.
Early stopping is performed on a 20\% validation split randomly sampled from the training set.
Full modeling and training details, including the architecture and hyperparameters used, are in Appendix~\ref{app:training}.

\subsection{Gaming detection in synthetic data}
\label{subsec:synth}

\begin{figure}
    \centering
    \includegraphics[width=\linewidth]{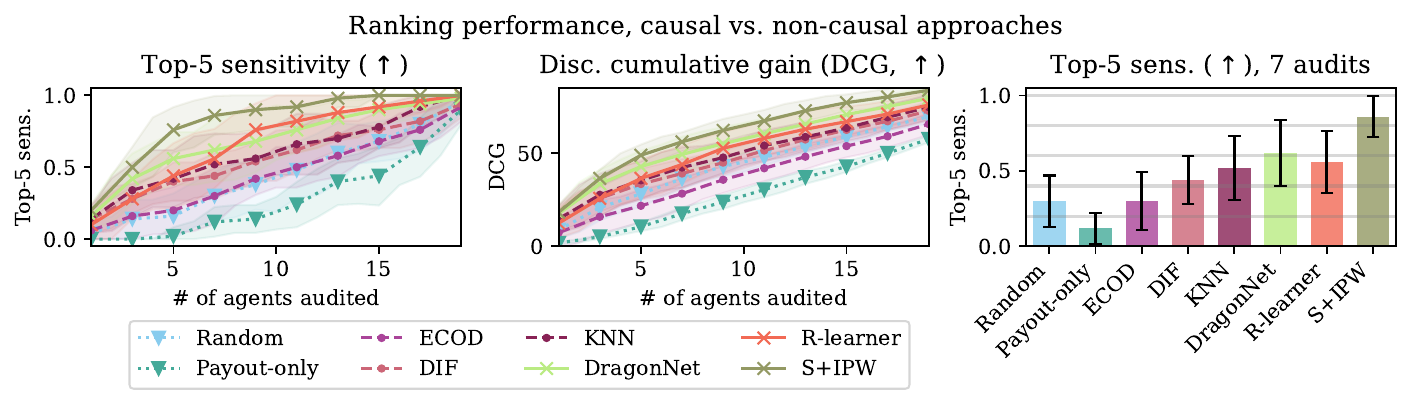}\vspace{-3mm}
    \caption{Mean top-5 sensitivity (left) and DCG (center) across \# of agents audited, and top-5 sensitivity with 7 audits (right) at mean range 0.9, with $\pm\sigma$ error. Causal methods improve over non-causal baselines. ${\triangledown}$: na{\"i}ve baselines. ${\circ}$: anomaly detectors. ${\times}$: causal effect estimators.}\vspace{-5mm}
    \label{fig:curve}
\end{figure}

\begin{figure}
    \centering
    \includegraphics[width=\linewidth]{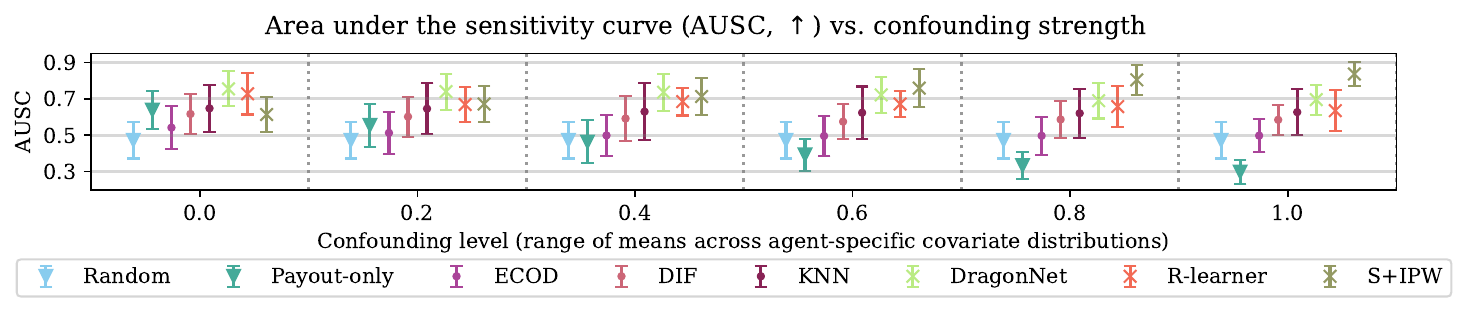}\vspace{-3mm}
    \caption{Area under the sensitivity curve (AUSC) for causal vs. non-causal methods across levels of confounding, with $\pm\sigma$ error. As confounding increases, a payout-only ranking degrades. Anomaly detection performance does not vary across confounding strength, maintaining slightly better than random rankings.
    Causal methods generally maintain higher mean AUSC than baselines across confounding levels. $\triangledown$: na{\"i}ve baselines. $\circ$: anomaly detectors. $\times$: causal effect estimators.}\vspace{-6mm}
    \label{fig:ausc}
\end{figure}

\begin{wrapfigure}{R}{0.38\linewidth}
\vspace{-5mm}
\centering
\includegraphics[width=\linewidth]{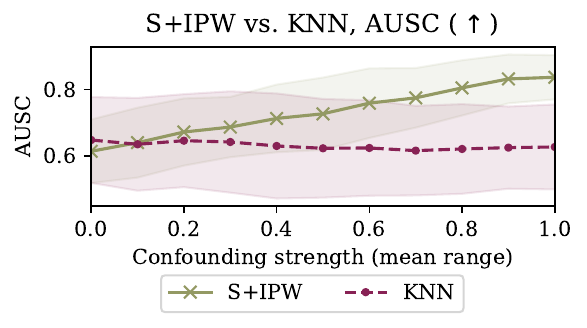} \vspace{-6mm}
  \caption{AUSC of S+IPW (causal) vs. KNN (non-causal) across confounding strength with $\pm\sigma$ error. The advantage of S+IPW over KNN decreases as confounding diminishes.}\vspace{-5mm}\label{fig:causal_vs_not}
\end{wrapfigure}%

\paragraph{Causal effect estimators identify gaming more efficiently than non-causal baselines.} 
Figure~\ref{fig:curve} shows the top-5 sensitivity and DCG of rankings produced by causal vs. non-causal gaming detection approaches at high confounding (mean range: 0.9).
Since the S-, T-learner, and DragonNet perform similarly  (Appendix~\ref{app:causal_sens}), we show only DragonNet here. 
Since PSM underperforms due to challenges with multi-treatment confounding control, we also defer results for PSM to Appendix~\ref{app:causal_sens}. 
Across audit intensities, causal approaches outperform non-causal methods in terms of sensitivity (\emph{e.g.}, at 7 audits, Fig.~\ref{fig:curve}, right; S+IPW: 0.860$\pm$0.135 vs. KNN: 0.520$\pm$0.215) and DCG (S+IPW: 56.1$\pm$5.11  vs. KNN: 42.3$\pm$10.8).\footnote{We report $\pm\sigma$ error bars as in the figures.}
Trends are similar at other levels of confounding (Appendix~\ref{app:1v1}), though the gain in ranking performance of causal approaches over non-causal methods diminish at lower levels of confounding (Figure~\ref{fig:causal_vs_not}; S+IPW AUSC: 0.614$\pm$0.096 vs. KNN: 0.648$\pm$0.129, mean range 0.0).

A payout-only approach yields worse than random ranking with sufficient confounding between covariates and agent decisions (payout-only AUSC: 0.299$\pm$0.067 vs. random: 0.473$\pm$0.101, Figure~\ref{fig:ausc}, mean range 1.0; \emph{e.g.}, if healthier patients are enrolled in more gaming-prone plans).
Anomaly detection methods (KNN, ECOD, DIF) are ill-suited for detecting gaming in dense regions of covariate space by design, while causal approaches would excel due to improved overlap. 
If outliers are more likely to be manipulated (\emph{e.g.}, if populations with lower ground-truth $d^*_p$ are more likely to be gamed), an anomaly detection method would identify gaming in such points.
Indeed, anomaly detectors empirically outperform random ranking but lag causal methods. We further discuss why anomaly detection methods underperform causal approaches in Appendix~\ref{app:1v1}.

Trends in ranking performance vary across causal effect estimators. 
DragonNet and the R-learner degrade slightly as confounding increases (DragonNet AUSC: 0.755$\pm$0.096 $\to$ 0.696$\pm$0.083; R-learner: 0.728$\pm$0.114 $\to$ 0.635$\pm$0.111; mean range 0.0 $\to$ 1.0), likely due to slightly worse overlap. However, S+IPW \emph{improves} as confounding increases (AUSC: 0.614$\pm$0.096 $\to$ 0.837$\pm$0.067; mean range 0.0 $\to$ 1.0). 
This is likely since the oracle IPW weights deviate from a uniform weighting as confounding increases. 
In such settings, estimation error in IPW weights may be less likely to incorrectly up-weight points that an oracle propensity score would down-weight, and vice versa.
While such error would bias the pointwise treatment effect estimate, for the purposes of ranking, causal effect estimators only need to correctly estimate the \emph{sign} of the treatment effect.  
Thus, we hypothesize that our ranking may be less affected by such errors in the propensity score estimate.
We leave formal analyses of the properties of IPW with respect to the sign of causal effect estimates to future work.  
A sensitivity analysis of all causal approaches is in Appendix~\ref{app:causal_sens}.

\paragraph{Takeaways.} In a synthetic dataset, causal effect estimation approaches identified gaming more efficiently than non-causal baselines across levels of confounding. The empirical results provide proof-of-concept for causal effect estimation as a gaming detection method.   

\begin{table}
    \centering
    \caption{Top 5 features most positively and negatively correlated with state gaming rankings predicted by S+IPW, as measured by Spearman correlation. We report $p$-values given a null hypothesis of zero correlation between predicted rankings and the statistic of interest. OP PT/SP: Outpatient physical therapy and speech pathology. SNF: skilled nursing facility.}\vspace{1mm}
    \begin{tabular}{ccc}
         \textbf{State-level healthcare system statistic} & \textbf{Spearman corr.} & $p$-value \\
         \midrule
         \% of OP PT/SP providers that are for-profit & 0.294  & 0.036 \\
         ratio of for-profit to non-profit hospitals & 0.272 & 0.053 \\
         \% of hospice providers that are for-profit & 0.232 & 0.101 \\
         \% providers that are ambulatory surgery centers & 0.222 & 0.118 \\ 
         \% of hospitals that are for-profit & 0.222& 0.118 \\
         \bottomrule
    \end{tabular}
    \label{tab:ffs_results}
    \vspace{-4mm}
\end{table}

\subsection{Case study: Detecting upcoding in U.S. Medicare}
\label{subsec:medicare}

As an exploratory analysis, we use the best-performing approach in the synthetic data (S+IPW) analyze upcoding by U.S. state in U.S. Medicare.
Table~\ref{tab:ffs_results} shows the five state-level healthcare statistics most positively and negatively correlated with gaming rankings predicted by S+IPW.

\smallskip\noindent\textbf{Upcoding is correlated with for-profit provider prevalence.}
Four of the top five features most positively associated with our predicted ranking reflect a greater state-level prevalence of for-profit healthcare providers. 
This matches the intuition that for-profit providers may game more aggressively due to stronger profit motives. 
Notably, the feature 2nd-most positively correlated with our rankings (ratio of for-profit to non-profit hospitals) is a suspected driver of upcoding in Medicare~\citep{silverman2004medicare, silverman2001profit}, with the idea that competition from for-profit providers drives non-profit providers towards gaming.
Note that many of the correlations are not statistically significant, and unmeasured factors such as healthcare quality could explain differences in diagnosis coding, rather than gaming. 
Despite the limitations, causal effect estimation shows promise as a practical approach to gaming detection.
\smallskip\noindent\textbf{Takeaways.} In an exploratory case study of gaming in U.S. Medicare, causal effect estimation yields a ranking of U.S. states that \emph{positively} correlates with the prevalence of for-profit healthcare providers, matching domain expertise on suspected drivers of gaming.
 
\section{Conclusion}

We propose a causally-motivated framework for ranking agents by gaming propensity in the context of strategic adaptation.
We show that the gaming parameter is only partially identifiable, but a ranking of a set of agents based on the gaming deterrence parameter is identifiable via causal inference.
We demonstrate the utility of causal effect estimation for gaming detection on synthetic data and a case study of upcoding in Medicare. 

\smallskip\noindent\textbf{Limitations \& broader impact.} We assume agents always increase $d_i$ with respect to ground truth, and that gaming explains all differences in agent behaviors, ignoring factors such as agent ``quality'' (\emph{e.g.}, quality of care). 
Utility-maximizing behavior and conditional exchangeability are strong assumptions, but are statistically unverifiable.
Many works in game theory and causal inference share these limitations.
We caution that policies informed by our framework could reinforce imbalanced power dynamics (\emph{i.e.}, are individual citizens~\citep{milli2019social} or more powerful entities gaming a model) via extraneous or weaponized accusations of gaming, since not all entities have equal capacity to respond to such claims. 
In particular, gaming by individuals may potentially reflect structural inequities rather than inherently pathological behavior.
To mitigate such risks, we suggest ``shadowing'' studies (decisions visible, but not acted upon) alongside existing audit mechanisms before adoption. 

\section*{Acknowledgements}
We thank (in alphabetical order) 
Amanda Kowalski, Daniel Shenfield, 
Donna Tjandra, Divya Shanmugan, Dylan Zapzalka, Ezekiel Emanuel, Jung Min Lee, Meera Krishnamoorthy, Sarah Jabbour, and Serafina Kamp for helpful conversations and feedback. 
Special thanks to Dexiong Chen,  Michael Ito, Shengpu Tang, Stephanie Shepard, and Winston Chen for their comments on drafts of this work, to Kathryn Ashbaugh, Michael Shafir, and the Advanced Research Computing team at the University of Michigan for assistance with data access and usage, and Matt Guido for coordinating our meetings.  
The authors are supported by a grant from Schmidt Futures (Award No. 70960). 
The funders had no role in the study design, analysis of results, decision to publish, or preparation of the manuscript. This study was deemed exempt and not regulated by the University of Michigan institutional review board (IRBMED; HUM00230364).

\bibliographystyle{unsrt}
\bibliography{references}

\appendix

\section{Additional related works}
\label{app:related}

\paragraph{Algorithmic anomaly/fraud detection.} Our framework can be understood as an algorithmic auditing method for fraud/anomaly detection. Many approaches assume that ground-truth fraud/gaming labels are available, reducing gaming detection to supervised learning~\citep{bauder2017medicare, hancock2024data, viaene2002comparison}).
We do not assume access to such labels.
 Unsupervised approaches for anomaly/fraud detection~\citep{de2018tax, liu2008isolation, hariri2019extended, gomes2021insurance} generally assume that anomalies are outliers with respect to some distribution.
 Mixture-modeling approaches similarly assume that fraudulent/non-fraudulent decisions correspond to distributions learnable under restrictive parametric assumptions~\citep{ekin2015overpayment}. 
 Instead of distributional assumptions, we make behavioral assumptions about agents following strategic classification.
Existing models of agent behavior in the context of fraud detection make domain-specific assumptions about features predictive of fraud~\citep{tennyson2002claims}, agent utility (\emph{e.g.}, constant penalties~\citep{dionne2009optimal}), or access to audit labels~\citep{tennyson2002claims}.
We generalize past work by making looser assumptions on agent utility and does not assume access to auxiliary information (\emph{e.g.}, audit labels), and circumvents the need for ground-truth labels by making assumptions about gaming.

\section{Omitted Proofs}
\label{app:proofs}

Here, we provide detailed proofs for all theoretical results. 

\subsection{Proposition~\ref{prop:non_ident}}

\begin{proposition*} 
Let $R' \triangleq \frac{d}{dd^*_p}$ and $c' \triangleq \frac{d}{dd^*_p}$. For any agent $p$, given Assumptions~\ref{assump:shared}-~\ref{assump:diminishing} and a fixed observation of $\Delta_p(d^*_p)$, 
\begin{equation}
   \lambda_p \in \left[ \frac{R'(\Delta_p(d^*_p))}{c'(\Delta_p(d^*_p))}, \infty \right).
\end{equation}
\end{proposition*}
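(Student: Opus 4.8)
The plan is to exploit the first-order optimality condition for the strictly concave utility in Eq.~\ref{eq:strat_obj}, and then treat the unobserved ground truth $d^*_p$ as a free parameter, tracing out how the implied $\lambda_p$ varies as $d^*_p$ ranges over its feasible set. The whole point is that a single observed $\Delta_p(d^*_p)$ is consistent with a one-parameter family of $(d^*_p, \lambda_p)$ pairs, so the best we can pin down is the infimum of that family.

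First I would observe that under Assumption~\ref{assump:diminishing} ($R$ concave) and Assumption~\ref{assump:strict} ($c$ strictly convex), the objective $U(\bar d) = R(\bar d) - \lambda_p c(\bar d - d^*_p)$ is strictly concave in $\bar d$ for any $\lambda_p \geq 0$, so its maximizer $\Delta_p(d^*_p)$ is unique and, when interior, characterized by the stationarity condition
\[
R'(\Delta_p(d^*_p)) = \lambda_p\, c'\!\left(\Delta_p(d^*_p) - d^*_p\right).
\]
Since $R$ is strictly increasing (Assumption~\ref{assump:increasing}), the numerator $R'(\Delta_p(d^*_p))$ is strictly positive, and since $c$ is strictly convex with $c'(0)=0$ (Assumption~\ref{assump:strict}), $c'$ is strictly positive on the open positive axis; whenever $\Delta_p(d^*_p) > d^*_p$ I can therefore solve for $\lambda_p = R'(\Delta_p(d^*_p)) / c'(\Delta_p(d^*_p) - d^*_p)$, where the numerator depends only on the fixed observed quantity $\Delta_p(d^*_p)$.

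The crux is then to bound this expression over the admissible range of the unknown $d^*_p$. By Assumption~\ref{assump:non_strat}, $d^*_p \in [0,1]$, and by the Remark, $d^*_p \leq \Delta_p(d^*_p)$, so $d^*_p \in [0, \Delta_p(d^*_p)]$ and the argument $\Delta_p(d^*_p) - d^*_p$ lies in $[0, \Delta_p(d^*_p)]$. Because $c$ is strictly convex, $c'$ is strictly increasing on $[0,\infty)$ with $c'(0)=0$; hence $c'(\Delta_p(d^*_p) - d^*_p)$ is strictly decreasing in $d^*_p$, so with the numerator held fixed, the implied $\lambda_p$ is strictly increasing in $d^*_p$. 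Evaluating the two extremes: at $d^*_p = 0$ the denominator attains its maximum $c'(\Delta_p(d^*_p))$, yielding the smallest consistent value $R'(\Delta_p(d^*_p))/c'(\Delta_p(d^*_p))$ (which matches the stated lower bound); as $d^*_p \uparrow \Delta_p(d^*_p)$ the denominator tends to $c'(0)=0$, driving $\lambda_p \to \infty$.

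Finally, for sharpness I would invoke continuity: the map $d^*_p \mapsto R'(\Delta_p(d^*_p))/c'(\Delta_p(d^*_p) - d^*_p)$ is continuous on $[0, \Delta_p(d^*_p))$, equals $R'(\Delta_p(d^*_p))/c'(\Delta_p(d^*_p))$ at $d^*_p = 0$, and diverges at the right endpoint, so by the intermediate value theorem it is surjective onto $[\,R'(\Delta_p(d^*_p))/c'(\Delta_p(d^*_p)), \infty)$. Each value in this interval thus corresponds to a feasible $(d^*_p, \lambda_p)$ pair reproducing the same observed $\Delta_p(d^*_p)$, which both confirms the interval and shows the lower endpoint is attained, so the bound is sharp. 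The main obstacle I anticipate is the boundary bookkeeping: I must confirm the maximizer is interior (so stationarity, rather than a Karush--Kuhn--Tucker corner condition, governs) and treat the degenerate no-gaming limit $d^*_p \to \Delta_p(d^*_p)$, where $c'$ vanishes only in the limit and is never attained, as exactly the mechanism that leaves the upper end of the interval open.
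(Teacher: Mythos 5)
Your proposal is correct and follows essentially the same route as the paper's proof: invert the first-order condition to write $\lambda_p = R'(\Delta_p(d^*_p))/c'(\Delta_p(d^*_p)-d^*_p)$, note monotonicity in the unknown $d^*_p$, and evaluate at $d^*_p=0$ and in the limit $d^*_p \to \Delta_p(d^*_p)^-$. Your added intermediate-value-theorem argument for sharpness and the remark on interior optimality are slightly more explicit than the paper's treatment, but they do not change the underlying argument.
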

\begin{proof}
Fix some $R, c$ satisfying assumptions Assumptions~\ref{assump:shared}-~\ref{assump:diminishing} and an arbitrary $\Delta_p(d^*_p)$.
Note that, for all $\Delta_p(d^*_p) \neq d^*_p$, we can write
\begin{equation}
     \lambda_p = \frac{R'(\Delta_p(d^*_p))}{c'(\Delta_p(d^*_p) - d^*_p)}.
\end{equation}
Note that this quantity is monotonic in $d^*$, since the strict convexity of $c$ implies that $c'$ is strictly increasing. Hence, we can substitute an upper and lower bound on $d^*$ to obtain our desired result. First, since $d^* \in [0, \Delta_p(d^*_p))]$, we can substitute $d^* = 0$ to reach 
\begin{equation}
    \lambda_p \geq \frac{R'(\Delta_p(d^*_p))}{c'(\Delta_p(d^*_p))}.
\end{equation}
Before considering the case where $d^* = \Delta_p(d^*_p))$, we note that a direct substitution yields $c'(\Delta_p(d^*_p) - \Delta_p(d^*_p)) = c'(0) = 0$, since $c$ is strictly convex and minimized at 0 via Assumption~\ref{assump:strict}. However, we can use a limiting argument since $c$ is differentiable and therefore continuous: 
\begin{equation}
   \underset{d^* \to \Delta_p(d^*_p)^-}{\lim} \lambda_p = \underset{d^* \to \Delta_p(d^*_p)^-}{\lim} \;\frac{R'(\Delta_p(d^*_p))}{c'(\Delta_p(d^*_p) - d^*)}.
\end{equation}
To evaluate the limit, we treat $R'$ as a positive constant (via Assumption~\ref{assump:increasing}) and note that the limit
\begin{equation}
    \underset{d^* \to \Delta_p(d^*_p)^-}{\lim}\;c'(\Delta_p(d^*_p) - d^*) = 0,
\end{equation}
from which we conclude $\underset{d^* \to \Delta_p(d^*_p)^-}{\lim}  = \infty$. Combining with the lower bound on $\lambda_p$ yields the desired bounds. 

\end{proof}

\paragraph{A causal DAG-based perspective on non-identifiability of $\lambda_p$.}

In the context of the causal graph of strategic adaptation (Figure~\ref{tab:example_data}, right), the non-identifiability of $\lambda_p$ is immediate. Since it is a proxy for the treatment effect of plan ($p$) on reported diagnosis rate ($d$), without accounting for $x$ and $d^*$, conditional exchangeability (Assumption~\ref{assump:cond_xch}) does not hold. 
The utility-maximization formulation of gaming borrowed from strategic classification provides additional information not encoded in the causal graph.
Namely, we have that $R'(\Delta_p(d^*_p)) / c'(\Delta_p(d^*_p) - d^*_p)$ is monotonic in $d^*_p$ and $d^*_p \in [0, \Delta_p(d^*_p)],$ provided that $R$ and $c$ satisfy Assumptions~\ref{assump:shared}-~\ref{assump:diminishing}.
These observations allow us to further bound the range of $\lambda_p$.

\subsection{Theorem~\ref{thrm:main}}

\begin{theorem*}
         Define $\Delta_{p'}(d^*_p)$ as 
    \begin{equation}
        \Delta_{p'}(d^*_p) \triangleq \underset{\bar{\mathbf{d}}\in [0, 1]}{\arg\max}\; R(\bar{d}) - \lambda_{p'} c(\bar{\mathbf{d}} - d^*_p)
    \end{equation}
    Then, given Assumptions~\ref{assump:shared}-~\ref{assump:diminishing}, $\Delta_{p}(d^*_p)< \Delta_{p'}(d^*_p)$ if and only if $\lambda_p > \lambda_{p'}$.
\end{theorem*}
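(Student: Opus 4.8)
The plan is to treat this as a monotone comparative statics problem: both $\Delta_p(d^*_p)$ and $\Delta_{p'}(d^*_p)$ are maximizers of the one-parameter family of objectives $U(\bar d;\lambda) = R(\bar d) - \lambda\, c(\bar d - d^*_p)$, evaluated at $\lambda = \lambda_p$ and $\lambda = \lambda_{p'}$ respectively, with the \emph{same} $R$, $c$, and $d^*_p$ (Assumption~\ref{assump:shared}). The entire claim reduces to showing that the maximizer $\bar d^*(\lambda) \triangleq \arg\max_{\bar d \in [0,1]} U(\bar d;\lambda)$ is a strictly decreasing function of $\lambda$ on the relevant range; a strictly decreasing map is injective, which delivers both directions of the biconditional at once.

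First I would establish that the maximizer is unique and well-behaved. Since $R$ is concave (Assumption~\ref{assump:diminishing}) and $c$ is strictly convex (Assumption~\ref{assump:strict}), the objective $U(\cdot;\lambda)$ is strictly concave for any $\lambda > 0$, so $\bar d^*(\lambda)$ is unique. Next I would note that the optimum lies strictly above $d^*_p$: the one-sided derivative at the ground truth is $\partial_{\bar d} U(d^*_p;\lambda) = R'(d^*_p) - \lambda\, c'(0) = R'(d^*_p) > 0$ by Assumptions~\ref{assump:increasing} and~\ref{assump:strict} (which give $c'(0)=0$), so utility is strictly increasing at $d^*_p$ and hence $\bar d^*(\lambda) > d^*_p$ (this is the Remark). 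Consequently $c'(\bar d^*(\lambda) - d^*_p) > 0$, a fact I will need for strictness.

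The core step is the comparative statics. At an interior optimum the first-order condition reads $R'(\bar d^*) = \lambda\, c'(\bar d^* - d^*_p)$. Differentiating implicitly in $\lambda$ gives
\[
\frac{d \bar d^*}{d\lambda} = \frac{c'(\bar d^* - d^*_p)}{R''(\bar d^*) - \lambda\, c''(\bar d^* - d^*_p)}.
\]
The numerator is strictly positive because $\bar d^* > d^*_p$, while the denominator is strictly negative since $R'' \le 0$ (concavity) and $c'' > 0$ (strict convexity) with $\lambda > 0$. Hence $d\bar d^*/d\lambda < 0$ strictly, so $\lambda_p > \lambda_{p'}$ forces $\bar d^*(\lambda_p) < \bar d^*(\lambda_{p'})$, i.e.\ $\Delta_p(d^*_p) < \Delta_{p'}(d^*_p)$, and injectivity of the strictly monotone map yields the converse.

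The main obstacle I anticipate is the boundary/corner case: for sufficiently small $\lambda$ the unconstrained optimum can exceed $1$, pinning $\bar d^*(\lambda)$ at the constraint boundary $\bar d = 1$ over a range of $\lambda$, which would break strict monotonicity. I would handle this either by restricting attention to the interior regime $\lambda \ge R'(1)/c'(1 - d^*_p)$, where the FOC holds (consistent with the analysis in Proposition~\ref{prop:non_ident}), or---to sidestep differentiability and corner issues entirely---by a revealed-preference argument: adding the two optimality inequalities for $a = \Delta_p(d^*_p)$ and $b = \Delta_{p'}(d^*_p)$ yields $(\lambda_p - \lambda_{p'})\big(c(b - d^*_p) - c(a - d^*_p)\big) \ge 0$, and since $c$ is increasing on $[0,\infty)$ with $a,b \ge d^*_p$, this gives weak monotonicity directly; strictness then follows by ruling out $a = b$ via the interior FOC, which would force $(\lambda_p - \lambda_{p'})\,c'(a - d^*_p) = 0$ with $c'(a - d^*_p) > 0$, a contradiction.
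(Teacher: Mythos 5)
Your proposal is correct, and it takes a somewhat different route from the paper's. The paper works directly with the two interior first-order conditions $R'(\Delta_p(d^*_p)) - \lambda_p c'(\Delta_p(d^*_p) - d^*_p) = 0 = R'(\Delta_{p'}(d^*_p)) - \lambda_{p'} c'(\Delta_{p'}(d^*_p) - d^*_p)$: for one direction it equates them and algebraically isolates $\lambda_p$ as $\lambda_{p'}$ times a factor exceeding one (using $R'$ non-increasing and $c'$ strictly increasing), and for the converse it argues by contradiction from the same pair of identities. Your framing as monotone comparative statics of the one-parameter family $U(\bar d;\lambda)$ is cleaner conceptually and gets both directions at once from strict monotonicity of $\lambda \mapsto \bar d^*(\lambda)$. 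Two caveats on your primary route: implicit differentiation needs $R$ and $c$ twice differentiable, which the paper's assumptions do not grant, and strict convexity of $c$ does not force $c''>0$ pointwise (e.g.\ $c(x)=x^4$ at $x=0$), so the denominator $R'' - \lambda c''$ could in principle vanish. Your revealed-preference fallback repairs both defects --- the added optimality inequalities give $(\lambda_p - \lambda_{p'})\bigl(c(b-d^*_p) - c(a-d^*_p)\bigr) \ge 0$ with no differentiability at all, and strictness via the FOC with $c'(a - d^*_p) > 0$ matches the paper's use of the Remark that optima lie strictly above $d^*_p$ --- so I would lead with that argument rather than the derivative computation. You are also more careful than the paper about the corner case $\bar d^* = 1$, where strict monotonicity genuinely fails; the paper's proof silently assumes interior optima, so flagging the restriction to the interior regime is a legitimate improvement rather than a gap in your own argument.
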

\begin{proof}
For all portions of the proof, let $R' \triangleq \frac{d}{dd^*_p}$ and $c \triangleq \frac{d}{dd^*_p}$.

($\implies$) We have the simultaneous first-order conditions 
\begin{align}
   & R'(\Delta_{p}(d^*_p)) - \lambda_p c'(\Delta_{p}(d^*_p) - d^*)=R'(\Delta_{p'}(d^*_p)) - \lambda_{p'} c'(\Delta_{p'}(d^*_p) - d^*) \\
   \iff& \underbrace{R'(\Delta_{p}(d^*_p)) - R'(\Delta_{p'}(d^*_p))}_{:= C_0 \geq 0} - \lambda_p c'(\Delta_{p}(d^*_p) - d^*)= - \lambda_{p'} c'(\Delta_{p'}(d^*_p) - d^*) \\
   \iff& \lambda_p c'(\Delta_{p}(d^*_p) - d^*) - C = \lambda_{p'} c'(\Delta_{p'}(d^*_p) - d^*) \\
   \iff & \lambda_p = \lambda_{p'}  \cdot \underbrace{\frac{c'(\Delta_{p'}(d^*_p) - d^*)  + C}{c'(\Delta_{p}(d^*_p) - d^*)}}_{C_1 > 1} > \lambda_{p'},
\end{align}
and hence $\lambda_{p} > \lambda_{p'}$ as desired. Note that $C_0 \geq 0$ as per Assumption~\ref{assump:increasing}, and $C_1 > 1$ as per our assumption that $\Delta_p(d^*_p) < \Delta_{p'}(d^*_p)$, from which we can conclude $c'(\Delta_{p'}(d^*_p) - d^*) > c'(\Delta_{p}(d^*_p) - d^*)$, because the strict convexity of $c$ implies that $c'$ strictly increases (Assumption~\ref{assump:strict}).

($\impliedby$) Pick any $\lambda_{p'} < \lambda_p$ and fix some $d^*$. By contradiction; suppose $\Delta_p(d^*_p) \geq \Delta_{p'}(d^*_p)$ as well. Then:
\begin{align}
    &R'(\Delta_{p}(d^*_p)) - \lambda_p c'(\Delta_{p}(d^*_p) - d^*)=R'(\Delta_{p'}(d^*_p)) - \lambda_{p'} c'(\Delta_{p'}(d^*_p) - d^*) \\
    \iff&R'(\Delta_{p}(d^*_p)) - R'(\Delta_{p'}(d^*_p)) =  \lambda_p c'(\Delta_{p}(d^*_p) - d^*)  -  \lambda_{p'} c'(\Delta_{p'}(d^*_p) - d^*).\label{eq:inter_main}
\end{align}
Now, note that $R'(\Delta_{p}(d^*_p)) - R'(\Delta_{p'}(d^*_p)) \leq 0$, since $\Delta_p(d^*_p) \geq \Delta_{p'}(d^*_p)$ by assumption and $R'$ is non-increasing in its argument. Furthermore, we can write
\begin{equation}
    \lambda_p c'(\Delta_{p}(d^*_p) - d^*)  -  \lambda_{p'} c'(\Delta_{p'}(d^*_p) - d^*) > \lambda_{p'} ( c'(\Delta_{p}(d^*_p) - d^*)  -  c'(\Delta_{p'}(d^*_p) - d^*)) > 0,
\end{equation}
where the first inequality is due to $\lambda_{p'} < \lambda_p$ and factoring, and the second inequality is since $c'(\Delta_{p}(d^*_p) - d^*)  -  c'(\Delta_{p'}(d^*_p) - d^*) > 0$ (via Assumption~\ref{assump:strict}) and $\lambda_{(\cdot)} > 0$ (by definition). Returning to Eq.~\ref{eq:inter_main}, we can write
\begin{equation}
    0 \geq R'(\Delta_{p}(d^*_p)) - R'(\Delta_{p'}(d^*_p)) = \lambda_{p'} c'(\Delta_{p'}(d^*_p) - d^*) > 0
\end{equation}
which is a contradiction ($0 \not> 0$). Thus, $\lambda_{p'} < \lambda_p$  implies $\Delta_p(d^*_p) < \Delta_{p'}(d^*_p)$ as desired.
\end{proof}

\paragraph{Potential extensions to upcoding detection.} With stronger assumptions, a weaker form of upcoding detection, which is stronger than ranking, is possible:
\begin{assumption}[Known cost and reward derivatives]
$R'$ and $c'$ can be evaluated at arbitrary points. \label{assump:derivative}
\end{assumption}

\begin{assumption}[Uncertainty in $d^*_p$ is bounded]
    Lower and upper bounds on $d^*_p$ are possible to obtain.\label{assump:bounds}
\end{assumption}

Then, define the following:
\begin{definition}[$\varepsilon$-gaming]
For $\varepsilon > 0$, an agent $p$ is $\varepsilon$-gaming if $\Delta_p(d^*_p) - d^*_p > \varepsilon$.
\end{definition}

In other words, we can define a threshold based on some $\varepsilon$ tolerance in deviations from $d^*_p$ for detecting gaming. Now, recall that
\begin{equation}
    \lambda_p = \frac{R'(\Delta_p(d^*_p))}{c'(\Delta_p(d^*_p) - d^*_p)},
\end{equation}
which is \textit{non-increasing} in $\Delta_p(d^*_p)$ due to the concavity of $R$ and strict convexity of $c$. Thus, substituting, we could define an agent-specific threshold 
\begin{equation}
    \lambda^*(p) = \frac{R'(\varepsilon + d^*_p)}{c'(\varepsilon)},
\end{equation}
and if some procedure for estimating $\lambda_p$ yields an estimate $\hat{\lambda}_p < \lambda^*(p)$, we conclude that the agent is $\varepsilon$-gaming, while $\hat{\lambda}_p > \lambda^*(p)$ rules out gaming. 
\begin{equation}
    \lambda^*(p) = \frac{R'(\varepsilon + d^*_p)}{c'(\varepsilon)},
\end{equation}
However, $d^*_p$ is still unknown. Suppose that, we can bound $d^*_p \in [\underline{d}_p,  \overline{d}_p]$ with probability $1 - \delta$: (\emph{e.g.}, via a  parametric binomial proportion confidence interval about some estimate of $d^*_p$):
\begin{equation}
    \lambda^*(p) \in \left[\frac{R'(\varepsilon + \overline{d^*_p})}{c'(\varepsilon)}, \frac{R'(\varepsilon + \underline{d^*_p})}{c'(\varepsilon)}\right],
\end{equation}
and abbreviate these bounds to $\lambda^*(p) \in [\underline{\lambda}^*(p; \varepsilon), \overline{\lambda}^*(p; \varepsilon)]$. We can use the same bounds on $d^*_p$ to produce a range of estimates for $\lambda_p$:
\begin{equation}
    \hat{\lambda}_p \in \left[\frac{R'(\Delta_p(d^*_p))}{c'(\Delta_p(d^*_p) - \overline{d}^*_p)}, \frac{R'(\Delta_p(d^*_p))}{c'(\Delta_p(d^*_p) - \underline{d}^*_p)} \right],
\end{equation}
and abbreviate these bounds to $\hat{\lambda}_p  \in [\underline{\hat{\lambda}}_p, \overline{\hat{\lambda}}_p]$. Then, we can compare the intervals for $\lambda^*(p; \varepsilon)$ and $\hat{\lambda}_p$ as follows:
\begin{itemize}
    \item If $\overline{\hat{\lambda}}_p < \underline{\lambda}^*(p; \varepsilon)$, then agent $p$ is $\varepsilon$-gaming with probability $1-\delta$.
    \item If $\underline{\hat{\lambda}}_p > \overline{\lambda}^*(p; \varepsilon)$, then we can rule out that agent $p$ is $\varepsilon$-gaming with probability $1-\delta$.
    \item Otherwise, $\hat{\lambda}_p  \in [\underline{\hat{\lambda}}_p, \overline{\hat{\lambda}}_p] \cap [\underline{\lambda}^*(p; \varepsilon), \overline{\lambda}^*(p; \varepsilon)]$ is non-empty, and we cannot definitively rule out or prove the existence of $\varepsilon$-gaming.
\end{itemize}

However, this algorithm may be of purely technical interest: it is doubtful that the requisite assumptions are satisfied in our motivating setting (upcoding detection), especially Assumption~\ref{assump:derivative}, and it is similarly unclear whether such assumptions apply in other settings. 
In addition, if agents game similarly (\emph{i.e.}, have similar values of $\lambda_p$), it may be difficult to rule out/prove $\varepsilon$-gaming for the vast majority of agents. 

\subsection{Corollary~\ref{corr:causal}}

\begin{corollary*}
    Define $\tau(p, p')$ as above. Then, given Assumptions~\ref{assump:shared}-~\ref{assump:positivity},  $\tau(p, p') > 0$ if and only if $\lambda_p < \lambda_{p'}$.
\end{corollary*}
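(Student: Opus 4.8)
The plan is to reduce Corollary~\ref{corr:causal} to Theorem~\ref{thrm:main} by showing that the causal estimand $\tau(p,p')$ is exactly the difference $\Delta_p(d^*_p) - \Delta_{p'}(d^*_p)$ appearing in the theorem. The whole point of introducing the causal inference assumptions (Assumptions~\ref{assump:cond_xch}--\ref{assump:positivity}) is to make the counterfactual quantity $\Delta_{p'}(d^*_p)$ — what agent $p'$ \emph{would have} done on agent $p$'s population — estimable from observational data, so the proof should be a short identification argument followed by an appeal to the already-established ranking equivalence.

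First I would recall from the preceding discussion that, under conditional exchangeability, consistency, and positivity, the standard identification result gives $\mathbb{E}[d_i(p) \mid \mathbf{x}_i] = \mathbb{E}[d_i \mid \mathbf{x}_i, p]$ and likewise for $p'$. Marginalizing over the covariate distribution $\mathbf{x}_i$ drawn from agent $p$'s population then yields that $\mathbb{E}_{\mathbf{x}_i}[\mathbb{E}[d_i(p)\mid \mathbf{x}_i]]$ equals the observed decision rate of agent $p$, which is $\Delta_p(d^*_p)$ by the definition that the average of $d_i(p)$ is $\Delta_p(d^*_p)$. The second step is the substantive one: I would argue that $\mathbb{E}_{\mathbf{x}_i}[\mathbb{E}[d_i(p')\mid \mathbf{x}_i]]$, the counterfactual decision rate of agent $p'$ evaluated on agent $p$'s covariate population (hence on ground truth $d^*_p$), is precisely $\Delta_{p'}(d^*_p)$ as defined in Eq.~\ref{eq:strat_cf}. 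This is where the shared-$R$, shared-$c$ assumption (Assumption~\ref{assump:shared}) does the work: because $p'$ solves the same utility-maximization problem with the same reward and cost functions but its own parameter $\lambda_{p'}$, compelling $p'$ to act on a population with ground truth $d^*_p$ produces exactly the quantity $\Delta_{p'}(d^*_p)$.

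With both identities in hand, I would write $\tau(p,p') = \Delta_p(d^*_p) - \Delta_{p'}(d^*_p)$, so that $\tau(p,p') > 0 \iff \Delta_p(d^*_p) > \Delta_{p'}(d^*_p) \iff \Delta_{p'}(d^*_p) < \Delta_p(d^*_p)$. Applying Theorem~\ref{thrm:main} with the roles of $p$ and $p'$ interchanged (the theorem states $\Delta_p(d^*_p) < \Delta_{p'}(d^*_p) \iff \lambda_p > \lambda_{p'}$, so the reversed inequality gives $\lambda_p < \lambda_{p'}$) closes the argument and yields $\tau(p,p') > 0 \iff \lambda_p < \lambda_{p'}$.

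The main obstacle is not the identification step, which is textbook, but the conceptual alignment in the second step: justifying rigorously that the counterfactual regression $\mathbb{E}[d_i \mid \mathbf{x}_i, p']$, averaged over $p$'s covariate distribution, coincides with the structural object $\Delta_{p'}(d^*_p)$ from the strategic-adaptation model rather than with $\Delta_{p'}(d^*_{p'})$. This requires being careful that the covariate distribution determines $d^*$ through the causal graph (the $d^* \to \mathbf{x}$ and $d^* \to d$ structure in Fig.~\ref{tab:example_data}, right), so that conditioning on $\mathbf{x}_i$ from $p$'s population fixes the ground truth at $d^*_p$ while the agent indicator $p'$ supplies the deterrence parameter $\lambda_{p'}$. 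Once this bridge between the potential-outcomes notation and the utility-maximization definition is made explicit, the remainder is immediate.
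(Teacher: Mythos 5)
Your proposal is correct and follows essentially the same route as the paper: identify each term of $\tau(p,p')$ with $\Delta_p(d^*_p)$ and $\Delta_{p'}(d^*_p)$ via Assumptions~\ref{assump:cond_xch}--\ref{assump:positivity}, then invoke Theorem~\ref{thrm:main}. If anything, you are more explicit than the paper's two-line argument about the key subtlety --- that the outer expectation over $\mathbf{x}_i$ must be taken over agent $p$'s covariate population so that the counterfactual term equals $\Delta_{p'}(d^*_p)$ rather than $\Delta_{p'}(d^*_{p'})$ --- which the paper glosses over.
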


\begin{proof}
It is sufficient to show that $\tau(p, p') > 0$ if and only if $\Delta_{p}(d^*_p)< \Delta_{p'}(d^*_p)$, from which Theorem~\ref{thrm:main} yields the desired result. We can do so by showing (without loss of generality) that $\mathbb{E}[\mathbb{E}[d_i \mid p, x_i]]$ is an unbiased estimate of $\Delta_{p}(d^*_p)$ (and the case for $\Delta_{p'}(d^*_p)$ proceeds symmetrically). Since  $\Delta_{p}(d^*_p)$ is equivalent to $\mathbb{P}[d_i = 1 \mid p]$ (Eq.~\ref{eq:strat_obj}), the result is immediate:
\begin{equation}
    \mathbb{E}[\mathbb{E}[d_i \mid p, x_i]] = \mathbb{E}[d_i \mid p] = \mathbb{P}[d_i = 1 \mid p] = \Delta_{p}(d^*_p).
\end{equation}
\end{proof}

\subsection{Proposition~\ref{prop:well_ordered}}
\begin{proposition*}
    Let $\tau(\cdot)$ be the oracle treatment effect function, and $\hat{\tau}$ be some sample estimate of $\tau$. Given Assumptions~\ref{assump:shared}-~\ref{assump:positivity}, for any $\varepsilon > 0$, if  $\sup \; |\hat{\tau}(p, p') - \tau(p, p')| \leq \varepsilon$, then, for all $p, p'$ such that $\underset{p, p'}{\inf}\; |\tau(p, p')| > \varepsilon$, $\hat{\tau}(p, p) > 0$ if and only if $\lambda_p <  \lambda_{p'}$.
\end{proposition*}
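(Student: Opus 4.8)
The plan is to establish the desired equivalence by a simple error-propagation argument: since the oracle $\tau$ already yields the correct ranking by Corollary~\ref{corr:causal}, I only need to verify that a uniformly $\varepsilon$-accurate estimate $\hat{\tau}$ cannot flip the sign of $\tau(p,p')$ for any pair where the true effect is larger than $\varepsilon$ in magnitude. First I would fix an arbitrary pair $p, p'$ satisfying $|\tau(p,p')| > \varepsilon$, and split into the two cases determined by the sign of $\tau(p,p')$.

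In the case $\tau(p,p') > \varepsilon$, the uniform bound $\sup |\hat{\tau}(p,p') - \tau(p,p')| \leq \varepsilon$ gives $\hat{\tau}(p,p') \geq \tau(p,p') - \varepsilon > 0$, so the estimated effect is strictly positive. Symmetrically, when $\tau(p,p') < -\varepsilon$, we get $\hat{\tau}(p,p') \leq \tau(p,p') + \varepsilon < 0$, so the estimated effect is strictly negative. In either case $\hat{\tau}(p,p')$ carries the same sign as $\tau(p,p')$, so $\hat{\tau}(p,p') > 0 \iff \tau(p,p') > 0$.

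It then remains to chain this sign-agreement with Corollary~\ref{corr:causal}, which states that $\tau(p,p') > 0 \iff \lambda_p < \lambda_{p'}$ under Assumptions~\ref{assump:shared}--\ref{assump:positivity}. Combining the two biconditionals immediately yields $\hat{\tau}(p,p') > 0 \iff \lambda_p < \lambda_{p'}$ for every pair with $|\tau(p,p')| > \varepsilon$, which is exactly the claim.

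I do not anticipate a genuine obstacle here, since the argument is a one-line triangle-inequality bound combined with an already-established corollary; the statement is essentially a robustness restatement of Corollary~\ref{corr:causal}. The only point requiring mild care is bookkeeping around which quantity ($\tau$ or $\hat{\tau}$) the hypothesis $|\tau(p,p')| > \varepsilon$ constrains, and ensuring the uniform sup-bound is applied at the correct pair; I would state explicitly that the condition is imposed on the \emph{oracle} effect so that the strict inequalities above are preserved. No new assumptions beyond those already invoked in Corollary~\ref{corr:causal} are needed.
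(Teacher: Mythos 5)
Your proposal is correct and follows essentially the same route as the paper: both reduce the claim to showing that a uniformly $\varepsilon$-accurate $\hat{\tau}$ must agree in sign with $\tau$ whenever $|\tau(p,p')|>\varepsilon$, and then chain that sign-agreement with Corollary~\ref{corr:causal}. The only cosmetic difference is that the paper argues the sign-agreement by contradiction while you argue it directly via the triangle inequality; the content is identical.
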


\begin{proof}
Choose $\tau, \hat{\tau}$, and some arbitrary $\varepsilon$ as specified in the theorem statement. It suffices to show that for all $p, p'$ such that $\underset{p, p'}{\inf}\; |\tau(p, p')| > \varepsilon$, it holds that $\hat{\tau}(p, p') > 0$ if and only if $\tau(p, p') > 0$. 

$(\implies)$ By contradiction; suppose that $\hat{\tau}(p, p') > 0$ but $\tau(p, p') < 0$. Then, by assumption, $\tau(p, p') < -\varepsilon$. But $\sup \; |\hat{\tau}(p, p') - \tau(p, p')| \leq \varepsilon$, so $\hat{\tau}(p, p') < 0$, yielding a contradiction. Thus, $\hat{\tau}(p, p') > 0 \implies \tau(p, p') > 0$. The reverse direction $(\impliedby)$ proceeds identically. Thus, the proposition is true.
\end{proof}

\paragraph{Connections to robustness to non-rational actors.}
We assume throughout that agents behave rationally; \emph{i.e.}, always perfectly maximize the utility function given by Eq.~\ref{eq:strat_obj}. However, the rational actor assumption may not always hold; \emph{e.g.}, if agents do not have the resources to carry out the resource maximizing action, or incorrectly estimate their costs. 
The former is particularly salient for our motivating problem of Medicare upcoding, in which representatives of certain plans may shedule a home visit with a healthcare professional to generate diagnosis codes~\citep{weaver2024upcoding}---a potentially resource-intensive process. Our ranking formulation can afford some robustness to violations of the rational actor assumption:

\begin{remark}[Robustness to bounded rationality violations]
Let $\Delta_p(d^*_p)$ be the utility-maximizing action, and let $\tilde{\Delta}_p(d^*_p)$ be an agent's observed action, such that $|\Delta_p(d^*_p) - \tilde{\Delta}_p(d^*_p)| < \varepsilon_1 / 2$ for some $\varepsilon_1 > 0$. Let $\tilde{\tau}(p, p
)$ be some sample estimate of $\tau$, fitted via $\tilde{\Delta}(\cdot)$. Then:
\begin{equation}
    |\hat{\tau}(p, p') - \tilde{\tau}(p, p')| = |(\Delta_p(d^*_p) - \tilde{\Delta}_p(d^*_p)) - (\Delta_{p'}(d^*_p) - \tilde{\Delta}_{p'}(d^*_p)) | \leq \varepsilon_1.
\end{equation}
Then, if $\sup \; |\hat{\tau}(p, p') - \tau(p, p')| \leq \varepsilon_2$ such that $\varepsilon \triangleq \varepsilon_1 + \varepsilon_2$, we can apply Proposition~\ref{prop:well_ordered} directly to conclude that $\hat{\tau}(p, p') > 0$ if and only if $\lambda_p < \lambda_{p'}$.
    
\end{remark}

Intuitively, violations of the rational actor assumption are another source of noise in the estimation of $\tau$. If the noise due to rationality violations plus noise due to standard estimation error are low, then no rankings are flipped, as desired. 

\subsection{Identifiability result}

For completeness, we show the derivation of the standard causal effect identifiability result for our problem setting (\emph{i.e.}, as in~\citep{imbens2015causal}). We note that this is a direct application of a known result in the literature.

\begin{proposition*}
    Given Assumptions~\ref{assump:cond_xch}-~\ref{assump:positivity}, $\mathbb{E}[d_i(p) \mid x_i] =  \mathbb{E}[d_i \mid x_i, p]$ and $\mathbb{E}[d_i(p') \mid x_i] = \mathbb{E}[d_i \mid x_i, p']$.
\end{proposition*}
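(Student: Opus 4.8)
The plan is to prove the first equality $\mathbb{E}[d_i(p) \mid \mathbf{x}_i] = \mathbb{E}[d_i \mid \mathbf{x}_i, p]$ by a standard chain of substitutions, applying each of the three causal assumptions in turn; the second equality then follows by an identical argument with $p'$ in place of $p$. I would work from the observable side $\mathbb{E}[d_i \mid \mathbf{x}_i, p]$ toward the counterfactual side $\mathbb{E}[d_i(p) \mid \mathbf{x}_i]$. At the outset I would invoke positivity (Assumption~\ref{assump:positivity}) to guarantee that the event $\{p_i = p\}$ occurs with positive probability for every $\mathbf{x}_i \in \mathcal{X}$, so that each conditional expectation appearing in the argument is well-defined; without this, $\mathbb{E}[d_i \mid \mathbf{x}_i, p]$ could be vacuous for some $\mathbf{x}_i$.

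The key substitution is consistency (Assumption~\ref{assump:consistency}): restricted to the event $\{p_i = p\}$, the observed decision coincides with the potential outcome, since $d_i = d_i(p_i) = d_i(p)$ there. This gives $\mathbb{E}[d_i \mid \mathbf{x}_i, p] = \mathbb{E}[d_i(p) \mid \mathbf{x}_i, p]$, trading the observed outcome for the counterfactual outcome inside the same conditioning event. Next I would apply conditional exchangeability (Assumption~\ref{assump:cond_xch}), namely $d_i(p) \perp\!\!\!\perp p_i \mid \mathbf{x}_i$: because the potential outcome is conditionally independent of the agent assignment given $\mathbf{x}_i$, additionally conditioning on $p_i = p$ leaves the conditional expectation unchanged, so $\mathbb{E}[d_i(p) \mid \mathbf{x}_i, p] = \mathbb{E}[d_i(p) \mid \mathbf{x}_i]$. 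Chaining these equalities yields the claim, and the $p'$ case is symmetric.

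Since this is a textbook identification result, I do not anticipate a genuine obstacle; the only subtlety is bookkeeping. I would be careful to state positivity \emph{first}, so that every conditional object is well-defined before consistency and exchangeability are applied, and to use consistency only within the event $\{p_i = p\}$, where $d_i(p_i) = d_i(p)$ holds and the replacement is legitimate. The resulting identity is precisely what Corollary~\ref{corr:causal} needs in order to treat $\mathbb{E}[\mathbb{E}[d_i \mid \mathbf{x}_i, p]]$ as an unbiased estimand of $\Delta_p(d^*_p)$.
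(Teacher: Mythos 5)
Your proposal is correct and follows essentially the same argument as the paper: the paper writes the identical chain $\mathbb{E}[d_i(p) \mid \mathbf{x}_i] = \mathbb{E}[d_i(p) \mid \mathbf{x}_i, p] = \mathbb{E}[d_i \mid \mathbf{x}_i, p]$, invoking conditional exchangeability and consistency for the two equalities and positivity for well-definedness, just traversed in the opposite direction from yours. No differences of substance.
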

\begin{proof}
    We show that $\mathbb{E}[d_i(p) \mid x_i] =  \mathbb{E}[d_i \mid x_i, p]$; the case for $\mathbb{E}[d_i(p') \mid x_i]$ proceeds identically. We can write
    \begin{align}
        \mathbb{E}[d_i(p) \mid x_i] = \mathbb{E}[d_i(p) \mid x_i, p] = \mathbb{E}[d_i \mid x_i, p]
    \end{align}
    where the first equality is an application of conditional exchangeability (Assumption~\ref{assump:cond_xch}), and the second is an application of consistency (Assumption~\ref{assump:consistency}). Positivity (Assumption~\ref{assump:positivity}) ensures that the conditional expectations are well-defined. 
\end{proof}

\subsection{What is the expected AUSC?}
\label{app:ausc}

For completeness, we also analyze the AUSC metric and provide a closed-form expression for its expected value under a random ranking. First, consider a population of $K \in \mathbb{N}$ agents, where we are interested in top-$k$ sensitivity for some $k \in \{1, \dots, K\}$. Suppose that we conduct $m$ random audits, for some $m \in \{1, \dots, K\}$.

The number of ground truth top-$k$ agents that are in the set of $m$ audited agents can be modeled as a hypergeometric random variable $n_m \sim Hyp(k, K, m)$, which is a variable describing the number of ``successes'' observed by a random draw from a population of $K$ objects with $k$ ``success states'' in $m$ draws \emph{without replacement}. By standard properties of the hypergeometric distribution, we know that $\mathbb{E}[n_k] = km / K$. Thus, by linearity of expectation, the average (across audit intensities $m$) number of agents identified by random auditing is given by
\begin{equation}
     \frac{1}{K}\sum_{m=1}^K \mathbb{E}[n_m] = \frac{k}{K^2} \cdot \frac{K(K-1)}{2} = \frac{k(K-1)}{2K}.
\end{equation}
We divide the result by $k$ to obtain a \emph{proportion} of the top-$k$ identified (as used in the definition of top-$k$ sensitivity), which yields
\begin{equation}
    \frac{1}{2} \cdot \frac{K-1}{K}.
\end{equation}
For a finite number of agents, this is bounded above by 0.5, but approaches 0.5 as $K \to \infty$ (a population of infinite agents).

\section{Data Processing}
\label{app:data}

\subsection{Fully synthetic data}
\label{app:synth}

\paragraph{Overview.} Our general fully synthetic data-generation pipeline is as follows. First, for each agent, we draw some value of $\bm{\mu}_p \in \mathbb{R}^{2}$. We draw individual observations $\mathbf{x}^{(i)} \in \mathbb{R}^{2}$ for each agent from a Gaussian with mean $\bm{\mu}_p$ and some fixed variance $\sigma^2$. Given the $\mathbf{x}^{(i)}$, we simulate a ``ground-truth'' decision $d^{*(i)}$. Note that $d^{*(i)}$ represents the ground truth decision rate. Then, for each agent $p$, we simulate a ``gamed'' (\emph{i.e.}, strategically perturbed) version of $d^{(i)}$ via a version of the agent utility function in Eq.~\ref{eq:strat_obj}. We now formally describe the data-generation process.

\paragraph{Generating a ``population'' for each agent.} To generate a ``population'' upon which each agent makes decisions, we randomly draw a mean vector specifying a Gaussian distribution for each agent. Given $P$ agents, the mean depends on $\lambda_p$ as follows:
\begin{align}
    \bm{\mu}_p = R_\mu \cdot \left(\frac{\tilde{\lambda}_p - \min_{p'} \; \tilde{\lambda}_p }{\max_{p'} \; \tilde{\lambda}_p - \min_{p'} \; \tilde{\lambda}_p}\right) + b ; \quad \tilde{\lambda}_p = \log(\lambda_p),\;\bar{\lambda} = \frac{1}{P} \sum_{i=1}^P \tilde{\lambda}_p,
\end{align}
where $a > 0$ is a parameter controlling the range of agent-specific means, and $b \in \mathbb{R}$ is some constnat offset. In other words, we log-transform the $\lambda_p$ values, then apply min-max scaling and a constant shift.
This design allows us to control the level of confounding in the synthetic data by changing $R_\mu$.
We use $\bm{\mu}_p$ to generate populations for each agent as described below.

Concretely, we consider $\lambda_{(\cdot)} \in [0.001, 0.003, 0.005, 0.007, 0.009, 0.01, 0.015, 0.02, 0.025, 0.03, \\ 0.035, 0.04, 0.045, 0.05, 0.06, 0.07, 0.08, 0.09, 0.1, 0.2, 0.3]$ (20 agents), and set $b = -1$. In practice, this means that $\bm{\mu}_p \in [-1, R_\mu - 1]$. We manually chose $\lambda_{(\cdot)}$ values to set the difficulty of ranking agents by such that a payout-only approach would succeed under low confounding, but fail under high levels of confounding.

\paragraph{Generating ground-truth and gamed decisions.}
The covariates $\mathbf{x}^{(i)}$ and ground-truth decisions $d^{*(i)}$ are drawn as per
\begin{align*}
    \mathbf{x}^{(i)} &\sim \mathcal{N}(\bm{\mu}_{p^{(i)}}, \sigma^2\mathbf{I}_{2 \times 2}) \\
    d^{*(i)} &\sim Ber(\alpha^{*(i)}); \quad \alpha^{*(i)} = \sigma(\mathbf{w}^\top \mathbf{x}^{(i)} + b_d),
\end{align*}
where $\mathbf{\mu}_{(\cdot)}$ is generated for each agent $p^{(i)}$ as described previously, $\mathbf{w}$ is a randomly generated positive vector, and $b_d \in \mathbb{R}$ is some offset.

The $p^{(i)}$s are assigned deterministically; \emph{i.e.}, we sequentially draw a fixed number of $\mathbf{x}^{(i)}$ from each agent-specific distribution and concatenate the results. For realism, inspired by the health insurance setting, and the fact that diagnosis rates for most conditions are relatively low, we set $b_d$ to $\text{logit}(0.05) - 
\hat{\mathbb{E}}[\mathbf{w}^\top \mathbf{x}]$ such that $\alpha^{*(i)}$ is relatively low, where $\hat{\mathbb{E}}[\cdot]$ denotes the sample mean.

Lastly, we simulate gamed agent decisions $d^{(i)}$  by solving a per-agent utility maximization problem:
\begin{align*}
    d^{(i)} &\sim Ber(\alpha_p^{(i)});\quad \alpha_p^{(i)} = \underset{\tilde{d}_p \in [0, 1]}{\arg\max}\; \log(\tilde{d}_p) - \lambda_p (\tilde{d}_p - d^{*(i)})^2.
\end{align*}

\paragraph{Implementation details.} All randomness is seeded \emph{once} at the start of the entire data-generation process.

\subsection{Medicare cohort}
\label{app:ffs}

We select a 0.2\% pseudo-random sample of all U.S. Medicare beneficiaries using the last characters of encrypted beneficiary IDs for inclusion in the cohort. As covariates, we choose age, racial category, biological sex, and diagnosis code categories (defined using the 2018 version of the ``Hierarchical Condition Category'' [HCC] schema released by the Center for Medicare Services). This yields a total of 97 features.

We exclude enrollees who did not generate any claims (no healthcare utilization; \emph{i.e.}, zero cost as recorded by Medicare), those not located in the 50 U.S. states and the District of Columbia, as well as \emph{dual-eligible} beneficiaries. 
Dual-eligibility refers to individuals simultaneously eligible for U.S. Medicare and U.S. Medicaid. While eligibility for Medicare is primarily based on age, eligibility for Medicaid is primarily based on disability status. Dual-eligible beneficiaries are excluded since the U.S. government uses a different payout model for dual-eligible vs. non-dual-eligible enrollees, potentially violating Assumption~\ref{assump:shared} (shared rewards), since agents may not be reacting to the same payout model for all enrollees.

\paragraph{Licensing.} Our cohort is drawn from a 20\% sample of all U.S. Medicare beneficiaries provided to the authors under a data usage agreement with the Center for Medicare \& Medicaid Services.  

\paragraph{State-level healthcare statistics.} We use a mix of raw and engineered features from the CMS (Center for Medicare \& Medicaid Services) Provider of Service file (license: Public Use File)~\citep{pos}\footnote{Full information on Public Use File licensing as defined by CMS is here: \url{https://www.cms.gov/research-statistics-data-and-systems/files-for-order/nonidentifiabledatafiles}} and the National Neighborhood Data Archive (NANDA; CC-BY 4.0)~\citep{khan2020national}. In NANDA, data is already aggregated at the state level (including the District of Columbia). We keep statistics pertaining to per-capita or per-sq. mi. healthcare provider density (50 features), filtering to providers with non-zero sales. From the CMS Provider of Service file, data is reported at the provider level. First, we filter out providers ineligible for Medicare participation, and providers that are no longer active. We then manually code ownership information following the provided data dictionaries (for-profit vs. non-profit vs. publicly owned). Next, we compute at the state level the prevalence of for-profit, non-profit, and publicly-owned providers of each type (as defined by CMS) by state, as well as the ratio of for-profit to non-profit providers. 
Additionally, we extract the average per-hospital bed count, physician count, medical school affiliation rate, and compliance rate as certified by CMS, for a total of 54 features. All aggregations for the Provider of Service file are unweighted (\emph{i.e.}, all hospitals/other healthcare facilities contribute equally). This yields a total of 104 state-level health indicators.

We report the statistics derived from the provider of service file (Table~\ref{tab:pos}) and NANDA (Table~\ref{tab:nanda}), with summary statistics across states. Summary statistics are computed excluding infinite and NaN values (\emph{i.e.}, ratios of 0/0 or 1/0).  
When computing correlations, we exclude states with NaN values for that feature (\emph{i.e.}, ratio of 0/0).
Proportions of publicly-owned, non-profit, and for-profit providers do not sum to 1, because providers reporting ``unknown'' or ``other'' ownership are excluded.
For further information on the feature definitions, consult the data dictionaries for the provider of service\footnote{\url{https://www.nber.org/research/data/provider-services-files}} and NANDA\footnote{\url{https://www.openicpsr.org/openicpsr/project/120907/version/V3/view}} files.

\begin{table}[]
    \centering
    \begin{tabular}{c|c}
         \textbf{Feature name} & \textbf{Mean$\pm$s.d.}  \\
         \midrule
            bed count & 95.125$\pm$27.826 \\
            \% affiliated with a medical school & 3.385$\pm$ 0.350 \\
            \# of personnel & 87.182$\pm$47.477 \\
            \% non-compliant & 0.115$\pm$0.074 \\
            \% of providers that are non-profit hospitals & 0.055$\pm$0.030 \\
            \% of providers that are for-profit hospitals & 0.024$\pm$ 0.015 \\
            \% of providers that are publicly-owned hospitals & 0.028$\pm$0.027\\
            \% of providers that are non-profit skilled nursing facilities & 0.069$\pm$0.046 \\
            \% of providers that are for-profit skilled nursing facilities & 0.171$\pm$0.068 \\
            \% of providers that are publicly-owned skilled nursing facilities & 0.020$\pm$0.021\\
            \% of providers that are non-profit home health agencies  & 0.033$\pm$0.018 \\
            \% of providers that are for-profit home health agencies  & 0.104$\pm$0.075 \\
            \% of providers that are publicly-owned home health agencies  & 0.008$\pm$0.010 \\
            \% of providers that are non-profit OP PT/SP providers & 0.005$\pm$0.010 \\
            \% of providers that are for-profit OP PT/SP providers& 0.025$\pm$0.018 \\
            \% of providers that are publicly-owned OP PT/SP providers & 0.001$\pm$0.001 \\
            \% of providers that are non-profit end-stage renal disease facilities & 0.014$\pm$0.012 \\
            \% of providers that are for-profit end-stage renal disease facilities & 0.096$\pm$0.042 \\
            \% of providers that are publicly-owned end-stage renal disease facilities& 0.001$\pm$0.002 \\
            \% of providers that are non-profit ICF/IDs & 0.041$\pm$0.052 \\
            \% of providers that are for-profit ICF/IDs & 0.025$\pm$0.050 \\
             \% of providers that are publicly-owned ICF/IDs & 0.005$\pm$0.007 \\
             \% of providers that are non-profit rural health clinics & 0.044$\pm$0.044 \\
            \% of providers that are for-profit rural health clinics  & 0.024$\pm$0.028 \\
            \% of providers that are publicly-owned rural health clinics  & 0.014$\pm$0.020 \\
            \% of providers that are non-profit ambulatory surgery centers  & 0.004$\pm$0.005 \\
             \% of providers that are for-profit ambulatory surgery centers & 0.084$\pm$0.061 \\
             \% of providers that are publicly-owned ambulatory surgery centers& 0.002$\pm$0.004 \\
             \% of providers that are non-profit  hospice care facilities & 0.020$\pm$0.012 \\
             \% of providers that are for-profit hospice care facilities & 0.039$\pm$0.033 \\
             \% of providers that are publicly-owned hospice care facilities & 0.002$\pm$0.004 \\
            \% of providers that are non-profit & 0.285$\pm$0.139 \\
            \% of providers that are for-profit & 0.592$\pm$0.154 \\
            \% of providers that are publicly-owned & 0.081$\pm$0.066 \\
            \% of urbanized population & 0.679$\pm$0.228 \\
            \% of providers that are ambulatory surgery centers  & 0.091$\pm$0.064 \\
            \% of providers that are end-stage renal disease facilities  & 0.111$\pm$0.040 \\
            \% of providers that are home health agencies  & 0.145$\pm$0.069 \\
            \% of providers that are hospitals & 0.118$\pm$0.038 \\
            \% of providers that are hospice care facilities & 0.070$\pm$0.035 \\
            \% of providers that are ICF/IDs & 0.073$\pm$0.088 \\
            \% of providers that are OP PT/SP providers  & 0.007$\pm$0.009 \\
            \% of providers that are rural health clinics & 0.082$\pm$0.067 \\
            \% of providers that are skilled nursing facilities & 0.261$\pm$0.073 \\
            ratio of for-profit to non-profit providers & 2.838$\pm$2.024 \\
            ratio of for-profit to non-profit hospitals & 0.730$\pm$0.955 \\
            ratio of for-profit to non-profit skilled nursing facilities & 3.633$\pm$2.387 \\
            ratio of for-profit to non-profit home health agencies & 5.135$\pm$6.020 \\
            ratio of for-profit to non-profit OP PT/SP providers & 10.149$^*$$\pm$10.108 \\
            ratio of for-profit to non-profit end-stage renal disease facilities & 15.905$^*$$\pm$18.546 \\
            ratio of for-profit to non-profit ICF/IDs & 1.074$^*$$\pm$1.969 \\
            ratio of for-profit to non-profit rural health clinics & 0.936$\pm$1.380 \\
            ratio of for-profit to non-profit ambulatory surgery centers& 32.826$^*$$\pm$30.949 \\
            ratio of for-profit to non-profit hospice care facilities & 3.989$\pm$6.797 \\
         \bottomrule
    \end{tabular}
    \caption{Features chosen for analysis derived from the provider of service file, with mean and standard deviation across states. S.d.: standard deviation. OP PT/SP: outpatient physical therapy and speech pathology. ICF/ID: intermediate care facility for individuals with intellectual disability. $^*$: invalid (1/0 or 0/0) ratios dropped.}
    \label{tab:pos}
\end{table}

\begin{table}[]
    \centering
    \begin{tabular}{c|c}
         \textbf{Feature name} & \textbf{Mean$\pm$s.d.}  \\
         \midrule
            ambulatory health care specialists per 1000 people & 4.429$\pm$2.745 \\
            ambulatory health care specialists per sq. mi. & 15.480$\pm$16.928 \\
            physicians per 1000 people & 1.753$\pm$1.196 \\
            physicians per sq. mi. & 6.477$\pm$7.593 \\
            physicians (excluding mental health) per 1000 people & 1.656$\pm$1.090 \\
            physicians (excluding mental health) per sq. mi. & 6.137$\pm$7.195 \\
            mental health physicians per 1000 & 0.096$\pm$0.112 \\
            mental health physicians  per sq. mi. & 0.340$\pm$0.412 \\
            dentists per 1000 people & 0.639$\pm$0.242 \\
            dentists per sq. mi. & 2.553$\pm$2.945 \\
            all other health practitioners (besides the above) per 1000 people & 1.483$\pm$1.039 \\
            all other health practitioners (besides the above) per sq. mi. & 4.949$\pm$5.264 \\
            chiropractors per 1000 people & 0.282$\pm$0.257 \\
            chiropractors per sq. mi. & 0.840$\pm$0.999 \\
            optometrists per 1000 people & 0.100$\pm$0.026 \\
            optometrists per sq. mi. & 0.307$\pm$0.240 \\
            non-physician mental health practitioners per 1000  & 0.176$\pm$0.390 \\
            non-physician mental health practitioners per square & 0.687$\pm$0.929 \\
            P/O/ST and audiologists per 1000 people & 0.133$\pm$0.054 \\
            P/O/ST and audiologists per sq. mi. & 0.474$\pm$0.657 \\
            other health practitioners (besides the above) per 1000 people & 0.792$\pm$0.610 \\
            other health practitioners (besides the above) per sq. mi. & 2.640$\pm$2.742 \\
            outpatient care centers per 1000 people & 0.181$\pm$0.133 \\
            outpatient care centers per sq. mi. & 0.556$\pm$0.516\\
            diagnostic labs per 1000 people & 0.075$\pm$0.114 \\
            diagnostic labs per sq. mi. & 0.229$\pm$0.230 \\
            home health services per 1000 people & 0.173$\pm$0.112 \\
            home health services per sq. mi. & 0.517$\pm$0.441 \\
            other ambulatory care services (besides the above) per 1000 people & 0.125$\pm$0.327 \\
            other ambulatory care services (besides the above) per sq. mi. & 0.198$\pm$0.176 \\
            all nursing and residential care facilities per 1000 people & 0.334$\pm$0.324 \\
            all nursing and residential care facilities per sq. mi. & 0.836$\pm$0.650 \\
            nursing care facilities per 1000 people & 0.224$\pm$0.330 \\
            nursing care facilities per sq. mi. & 0.473$\pm$0.308 \\
            residential IDD care facilities per 1000 people & 0.026$\pm$0.014 \\
            residential IDD care facilities per sq. mi. & 0.102$\pm$0.174 \\
            inpatient facilities for care of individuals with IDDs per 1000 people& 0.012$\pm$0.010  \\
            inpatient facilities for care of individuals with IDDs per sq. mi.& 0.039$\pm$0.068 \\
            inpatient facilities providing MHSA care per 1000 people & 0.014$\pm$0.007 \\
            inpatient facilities providing MHSA care per sq. mi. & 0.063$\pm$0.109 \\
            continuing care and assisted living facilities per 1000 people& 0.023$\pm$0.011 \\
            continuing care and assisted living per sq. mi. & 0.070$\pm$0.062 \\
            other residential care facilities (besides the above) per 1000 people & 0.061$\pm$0.031 \\
            other residential care facilities (besides the above) per sq. mi. & 0.191$\pm$0.179 \\
            pharmacies and drug stores per 1000 people & 0.314$\pm$0.669 \\
            pharmacies and drug stores per sq. mi. & 0.853$\pm$1.336 \\
            optical goods stores per 1000 people & 0.076$\pm$0.025 \\
            optical goods stores per sq. mi. & 0.282$\pm$0.360 \\
            misc. other health and personal care stores per 1000 people & 0.054$\pm$0.014 \\
            misc. other health and personal care stores per sq. mi. & 0.149$\pm$0.090 \\
         \bottomrule
    \end{tabular}
    \caption{Features chosen for analysis derived from NANDA, with mean and standard deviation across states. All chosen summary statistics are for providers with $>\$0$ U.S. dollars in sales. S.d.: standard deviation. Sq. mi.: square mile. P/O/ST: physical, occupational, and speech therapists. IDD: intellectual and developmental disabilities. MHSA: mental health and substance abuse.}
    \label{tab:nanda}
\end{table}

\begin{figure}[t!]
    \centering
    \includegraphics[width=\linewidth]{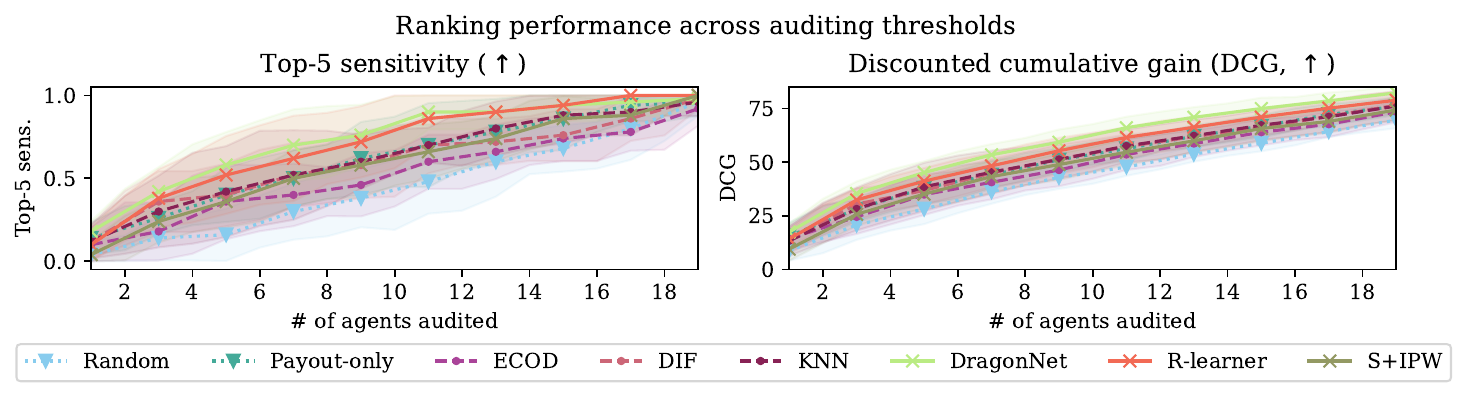}\vspace{-3mm}
    \caption{Mean top-5 sensitivity (left) and DCG (right) across \# of agents audited at mean range 0.0, with $\pm\sigma$ error. $\triangledown$: na{\"i}ve baseline. $\circ$: anomaly detection method. $\times$: causal effect estimator.}
    \label{fig:cfd_0.0}
\end{figure}
\begin{figure}[h!]
    \centering
    \includegraphics[width=\linewidth]{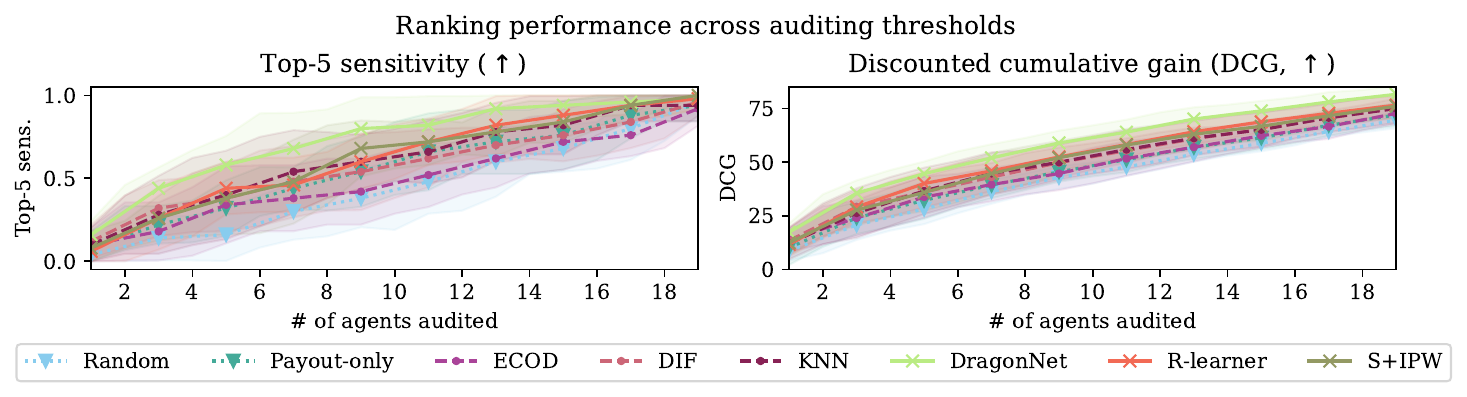}\vspace{-3mm}
    \caption{Mean top-5 sensitivity (left) and DCG (right) across \# of agents audited at mean range 0.1, with $\pm\sigma$ error. $\triangledown$: na{\"i}ve baseline. $\circ$: anomaly detection method. $\times$: causal effect estimator.}
    \label{fig:cfd_0.1}
\end{figure}
\begin{figure}[h!]
    \centering
    \includegraphics[width=\linewidth]{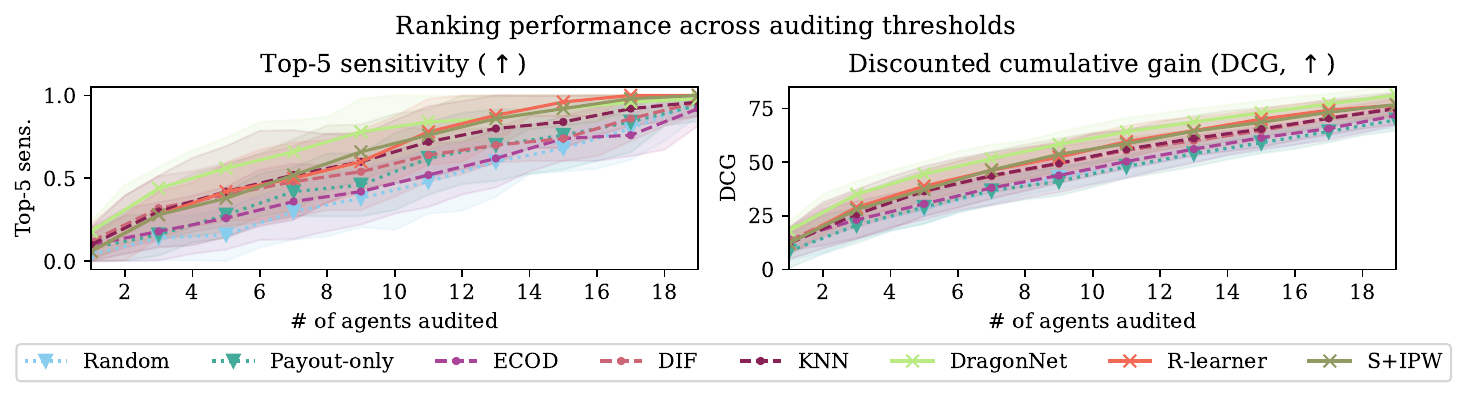}\vspace{-3mm}
    \caption{Mean top-5 sensitivity (left) and DCG (right) across \# of agents audited at mean range 0.2, with $\pm\sigma$ error. $\triangledown$: na{\"i}ve baseline. $\circ$: anomaly detection method. $\times$: causal effect estimator.}
    \label{fig:cfd_0.2}
\end{figure}
\begin{figure}[h!]
    \centering
    \includegraphics[width=\linewidth]{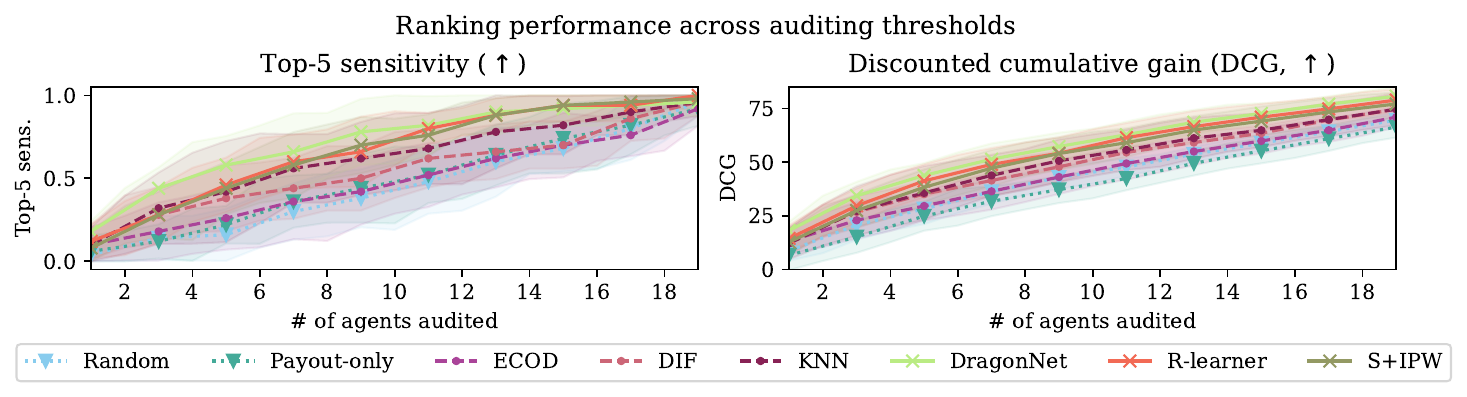}\vspace{-3mm}
    \caption{Mean top-5 sensitivity (left) and DCG (right) across \# of agents audited at mean range 0.3, with $\pm\sigma$ error. $\triangledown$: na{\"i}ve baseline. $\circ$: anomaly detection method. $\times$: causal effect estimator.}
    \label{fig:cfd_0.3}
\end{figure}
\begin{figure}[h!]
    \centering
    \includegraphics[width=\linewidth]{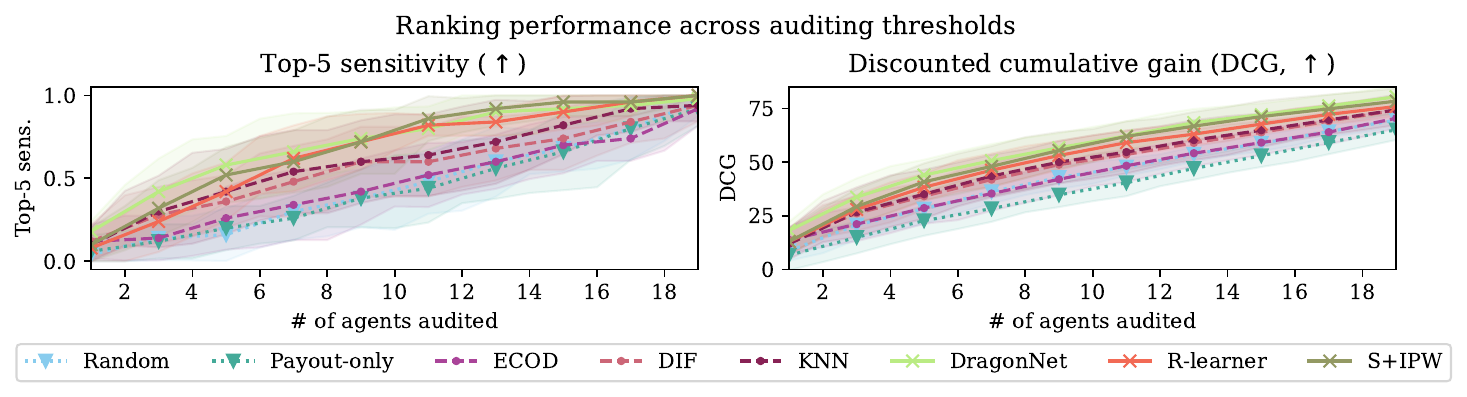}\vspace{-3mm}
    \caption{Mean top-5 sensitivity (left) and DCG (right) across \# of agents audited at mean range 0.4, with $\pm\sigma$ error. $\triangledown$: na{\"i}ve baseline. $\circ$: anomaly detection method. $\times$: causal effect estimator.}
    \label{fig:cfd_0.4}
\end{figure}
\begin{figure}[h!]
    \centering
    \includegraphics[width=\linewidth]{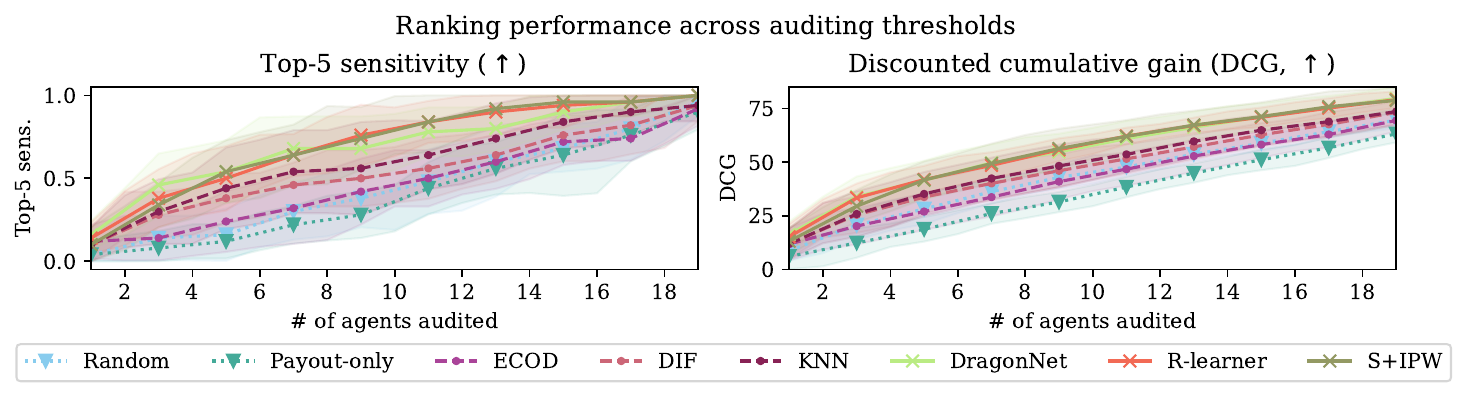}\vspace{-3mm}
    \caption{Mean top-5 sensitivity (left) and DCG (right) across \# of agents audited at mean range 0.5, with $\pm\sigma$ error. $\triangledown$: na{\"i}ve baseline. $\circ$: anomaly detection method. $\times$: causal effect estimator.}
    \label{fig:cfd_0.5}
\end{figure}
\begin{figure}[h!]
    \centering
    \includegraphics[width=\linewidth]{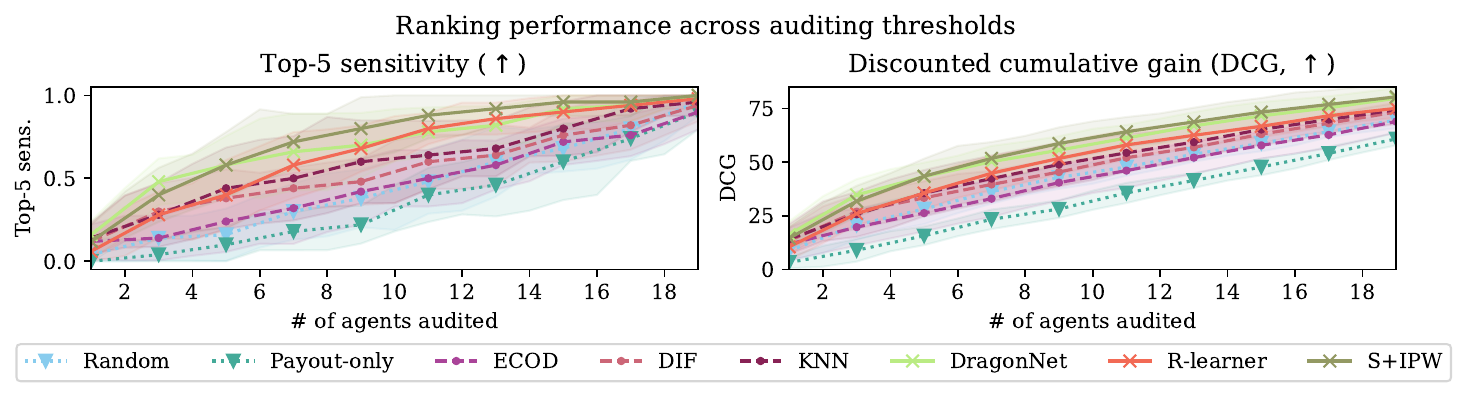}\vspace{-3mm}
    \caption{Mean top-5 sensitivity (left) and DCG (right) across \# of agents audited at mean range 0.6, with $\pm\sigma$ error. $\triangledown$: na{\"i}ve baseline. $\circ$: anomaly detection method. $\times$: causal effect estimator.}
    \label{fig:cfd_0.6}
\end{figure}
\begin{figure}[h!]
    \centering
    \includegraphics[width=\linewidth]{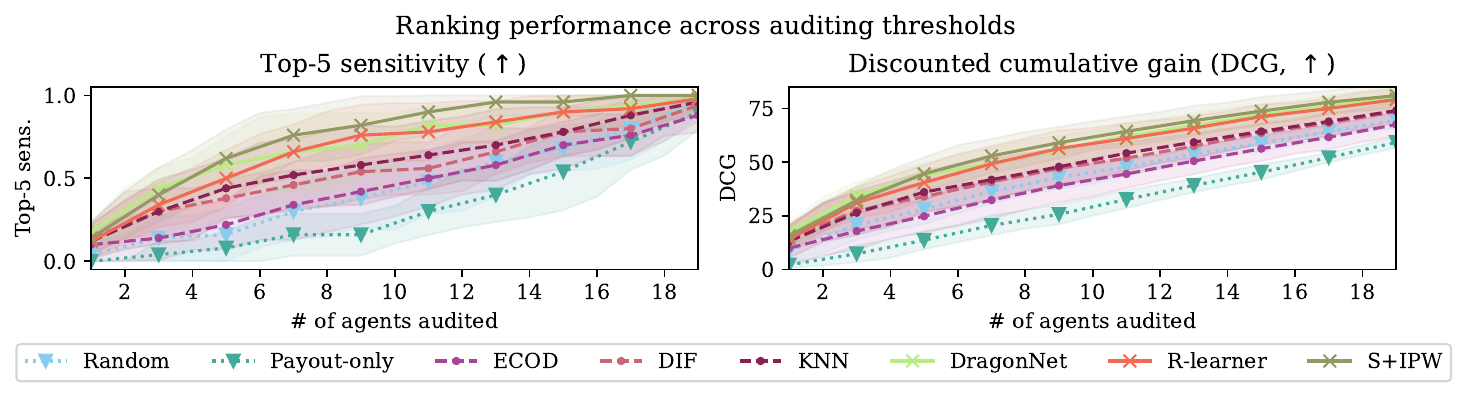}\vspace{-3mm}
    \caption{Mean top-5 sensitivity (left) and DCG (right) across \# of agents audited at mean range 0.7, with $\pm\sigma$ error. $\triangledown$: na{\"i}ve baseline. $\circ$: anomaly detection method. $\times$: causal effect estimator.}
    \label{fig:cfd_0.7}
\end{figure}
\begin{figure}[h!]
    \centering
    \includegraphics[width=\linewidth]{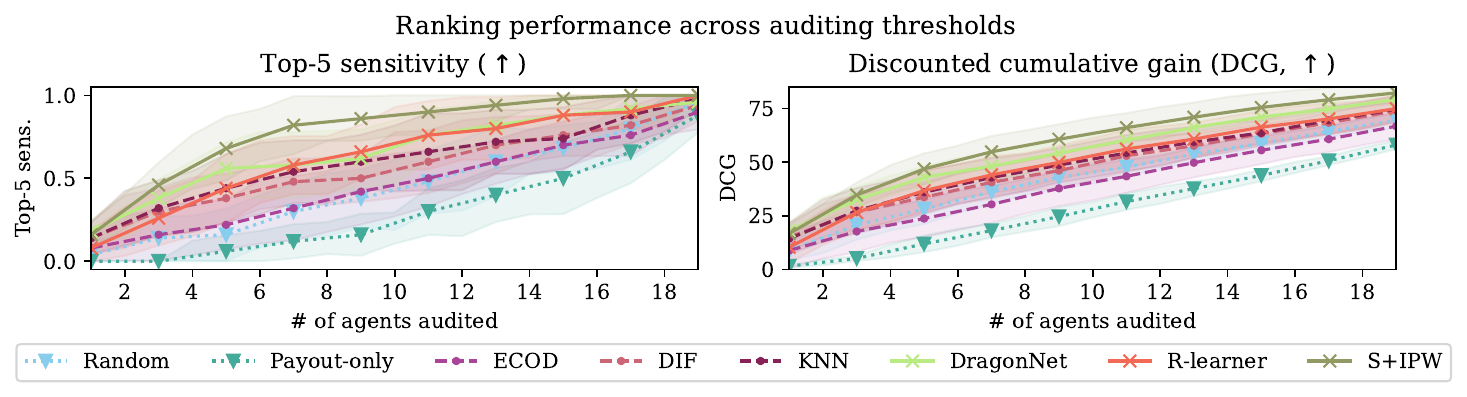}\vspace{-3mm}
    \caption{Mean top-5 sensitivity (left) and DCG (right) across \# of agents audited at mean range 0.8, with $\pm\sigma$ error. $\triangledown$: na{\"i}ve baseline. $\circ$: anomaly detection method. $\times$: causal effect estimator.}
    \label{fig:cfd_0.8}
\end{figure}
\begin{figure}[h!]
    \centering
    \includegraphics[width=\linewidth]{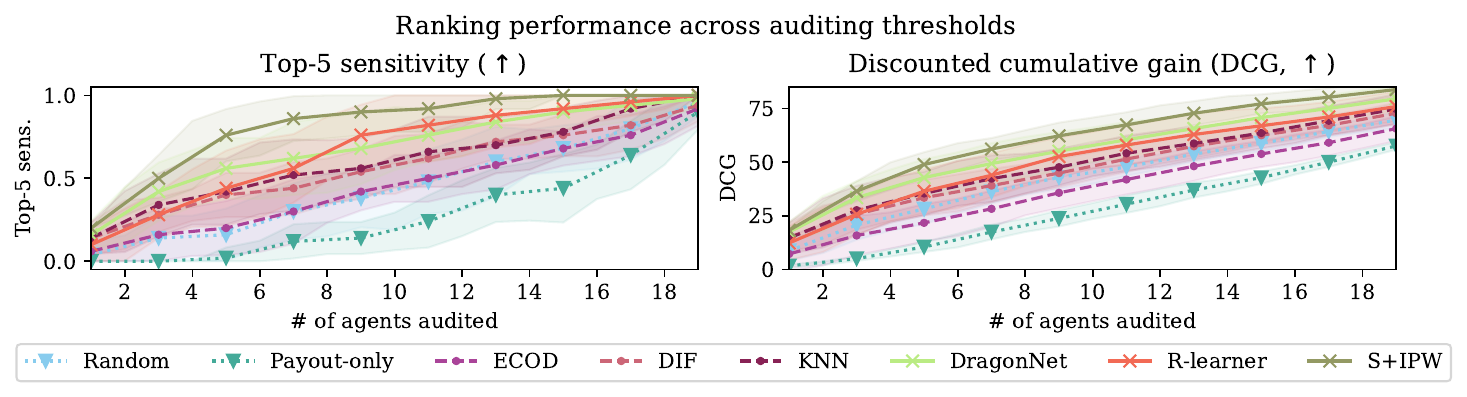}\vspace{-3mm}
    \caption{Mean top-5 sensitivity (left) and DCG (right) across \# of agents audited at mean range 0.9, with $\pm\sigma$ error. $\triangledown$: na{\"i}ve baseline. $\circ$: anomaly detection method. $\times$: causal effect estimator.}
    \label{fig:cfd_0.9}
\end{figure}
\begin{figure}[h!]
    \centering
    \includegraphics[width=\linewidth]{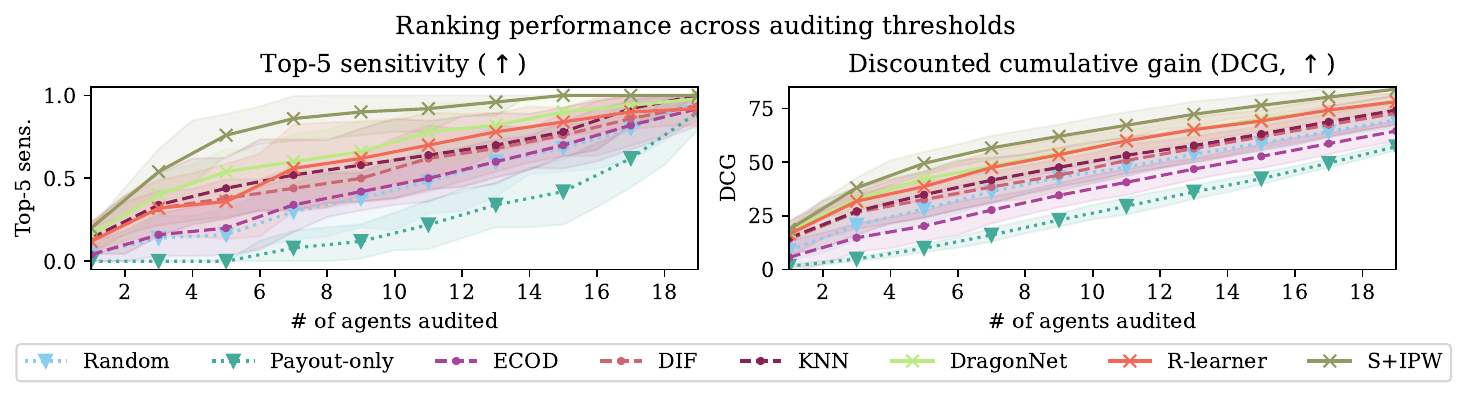}\vspace{-3mm}
    \caption{Mean top-5 sensitivity (left) and DCG (right) across \# of agents audited at mean range 1.0, with $\pm\sigma$ error. $\triangledown$: na{\"i}ve baseline. $\circ$: anomaly detection method. $\times$: causal effect estimator.}
    \label{fig:cfd_1.0}
\end{figure}

\begin{figure}[t!]
    \centering
    \includegraphics[width=\linewidth]{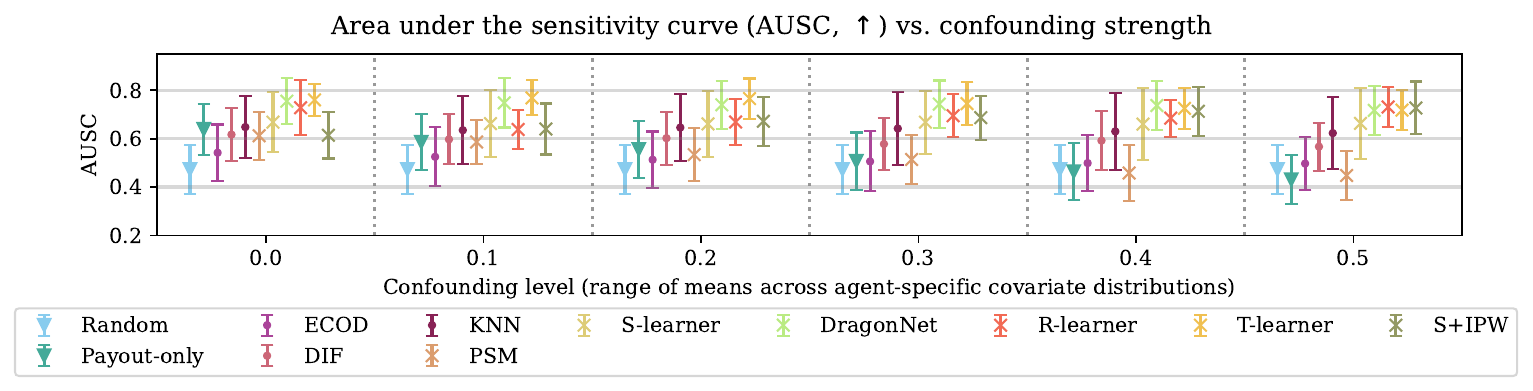}\vspace{-3mm}
    \caption{Area under the sensitivity curve (AUSC) for all methods tested across levels of confounding (mean range $R_\mu \leq 0.5$). $\triangledown$: na{\"i}ve baseline. $\circ$: anomaly detection method. $\times$: causal effect estimator.}
    \label{fig:ausc_left}
\end{figure}
\begin{figure}[h!]
    \centering
    \includegraphics[width=\linewidth]{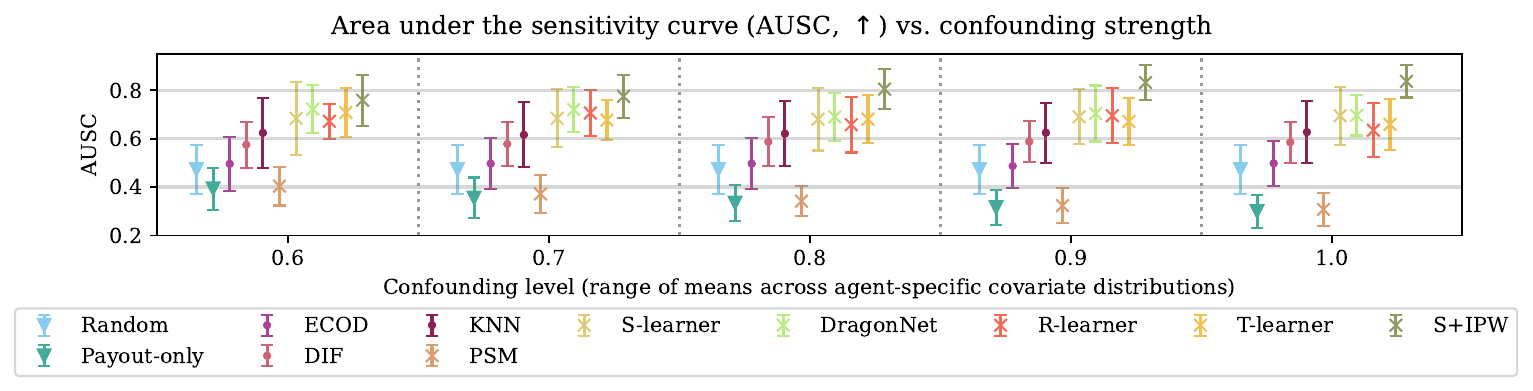}\vspace{-3mm}
    \caption{Area under the sensitivity curve (AUSC) for all methods tested across levels of confounding (mean range $R_\mu > 0.5$). $\triangledown$: na{\"i}ve baseline. $\circ$: anomaly detection method. $\times$: causal effect estimator.}
    \label{fig:ausc_right}
\end{figure}

\section{Supplementary results}
\label{app:results}

\subsection{Causal vs. non-causal approaches, all audit thresholds \& all levels of confounding}
\label{app:1v1}
We show plots with the top-5 sensitivity and DCG at all levels of confounding evaluated, as measured by the mean range $R_\mu$ (\emph{i.e.}, $R_\mu \triangleq \max \; \mu_p - \min \; \mu_p$ across agents). We choose $R_\mu \in \{0, 0.1, \dots, 1.0\}$. For convenience, we provide an index of results:

\begin{itemize}
    \item $R_\mu = 0.0$: Figure~\ref{fig:cfd_0.0}
     \item $R_\mu = 0.1$: Figure~\ref{fig:cfd_0.1}  
     \item $R_\mu = 0.2$: Figure~\ref{fig:cfd_0.2}   
     \item $R_\mu = 0.3$: Figure~\ref{fig:cfd_0.3}   
     \item $R_\mu = 0.4$: Figure~\ref{fig:cfd_0.4}
    \item $R_\mu = 0.5$: Figure~\ref{fig:cfd_0.5}
    \item $R_\mu = 0.6$: Figure~\ref{fig:cfd_0.6}
    \item $R_\mu = 0.7$: Figure~\ref{fig:cfd_0.7}
    \item $R_\mu = 0.8$: Figure~\ref{fig:cfd_0.8}
    \item $R_\mu = 0.9$: Figure~\ref{fig:cfd_0.9}
    \item $R_\mu = 1.0$: Figure~\ref{fig:cfd_1.0}
\end{itemize}

We summarize the main trends for non-causal approaches and defer discussion of causal methods to the sensitivity analysis of all causal effect estimators. For convenience, we also plot the AUSC for \emph{all} approaches across levels of confounding in Figures~\ref{fig:ausc_left} ($R_\mu \leq 0.5$) and \ref{fig:ausc_right} ($R_\mu > 0.5$). \textbf{For readability, in contrast to our other figures, causal effect estimators are marked with ``$\circ$'' instead of ``$\times$''.}

\paragraph{The payout-only approach can degrade to worse-than-random ranking due to confounding.} The random auditing method performs similarly across all levels of confounding, as expected. However, as confounding increases, the payout-only ranking degrades toward random, then worse than random. The latter can occur if confounding is so strong that the relationship between gaming and observed diagnosis rates flips; \emph{e.g.}, if (in the health insurance setting) very dishonest plans tend to serve relatively healthy populations compared to more gaming-averse plans.

\paragraph{In the synthetic dataset, anomaly detection methods use the variance of the observed decision ($\mathbb{V}[d_i]$) as a gaming signature.} Most anomaly detection methods perform near-random, or slightly better than random. While this is due to the properties of the synthetic dataset, the results highlight potentially interesting characteristics of anomaly detection methods for gaming detection.
Recall that the anomaly detection methods take covariates and agent decisions $(x_i, d_i)$ as input. $\mathbb{V}[x_i]$ is identical across agents by design, and all agents see the same number of observations. However, $\mathbb{V}[d_i]$ may vary across agents. Thus, KNN uses $\mathbb{V}[d_i]$ as a signal of gaming, which has utility under low confounding, but is less useful as confounding increases. 

To see this, recall that $d_i$ is a binary decision, and let $p_d = P(d_i = 1)$ for some agent. $\mathbb{V}[d_i]$ is proportional to $p_d \cdot (1-p_d)$, and is concave in $p_d$ with maximizer $p_d = 0.5$. Since $P(d_i)$ is generally low in simulation (\emph{i.e.}, $\ll 0.5$), agents with higher observed $P(d_i)$ rates will generally have higher $\mathbb{V}[d_i]$ as well, yielding a higher anomaly score. Under no confounding, these are precisely the agents that are gaming more.
Indeed, KNN performs slightly better than random, but its advantage over random performance diminishes slightly with confounding as the utility of $\mathbb{V}[d_i]$ as a signature for gaming. 
However, even at low confounding, KNN and related anomaly detection methods are inherently unable to detect gaming in non-outlier points, which occur in denser regions of covariate space. In these regions, causal methods enjoy an advantage over anomaly detection approaches due to improved overlap (Assumption~\ref{assump:positivity}). Thus, KNN does not exceed the ranking performance of the best causal methods. Note that this argument assumes that $\mathbb{V}[x_i]$ is similar across agents, which holds in the synthetic dataset.

More advanced anomaly detection methods can also achieve slightly better than random performance (\emph{e.g.}, DIF), but since these approaches transform the covariate space in highly non-linear ways (\emph{e.g.}, via random projections as in DIF, or via feature-wise CDFs as in ECOD), they may destroy outlier information useful for gaming detection. Ultimately, anomaly detection methods have inherently limited utility for gaming detection, since some gamed decisions may appear distributionally close to other gamed decisions.

\subsection{Sensitivity analysis of causal effect estimators}
\label{app:causal_sens}
\begin{figure}[h!]
    \centering
    \includegraphics[width=\linewidth]{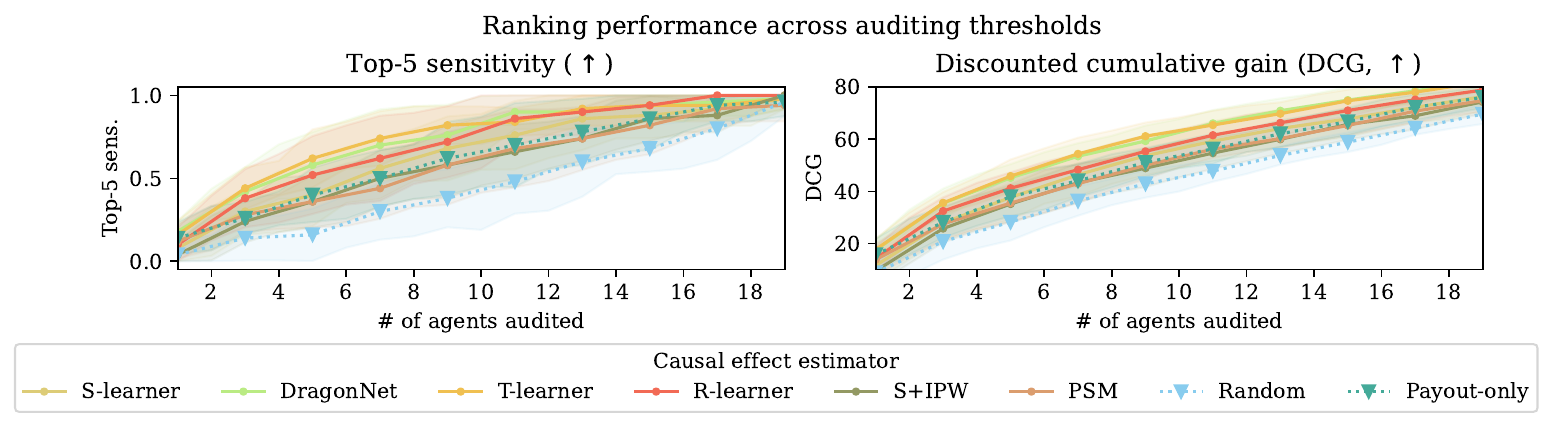}
    \caption{Sensitivity analysis of all causal methods tested, mean range 0.0. $\triangledown$: na{\"i}ve baseline. $\circ$: causal effect estimator.}
    \label{fig:causal_0.0}
\end{figure}
\begin{figure}[h!]
    \centering
    \includegraphics[width=\linewidth]{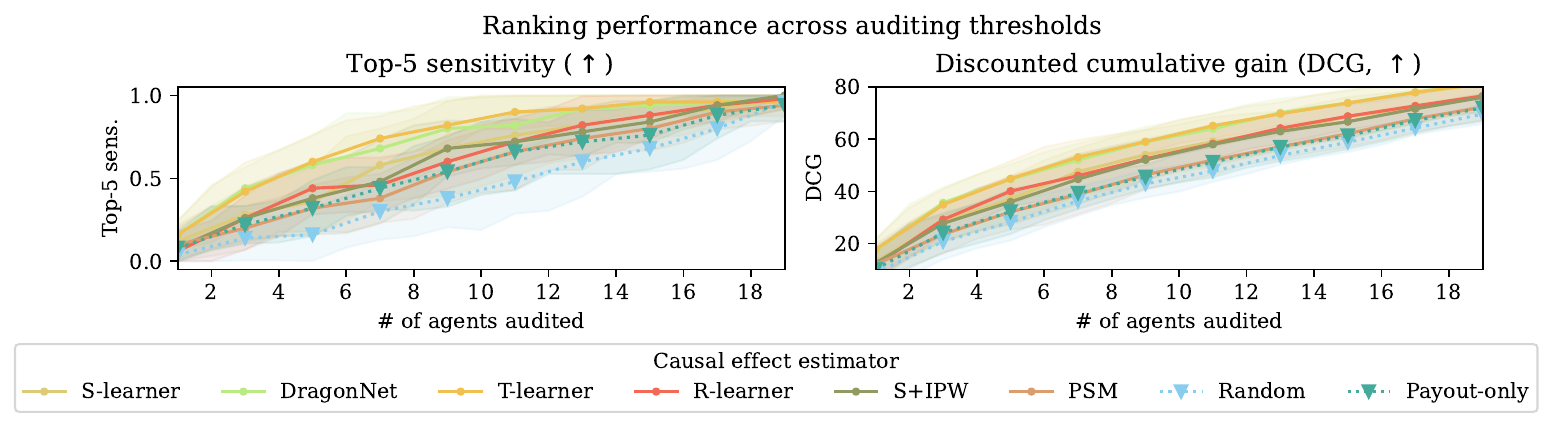}
    \caption{Sensitivity analysis of all causal methods tested, mean range 0.1. $\triangledown$: na{\"i}ve baseline. $\circ$: causal effect estimator.}
    \label{fig:causal_0.1}
\end{figure}
\begin{figure}[h!]
    \centering
    \includegraphics[width=\linewidth]{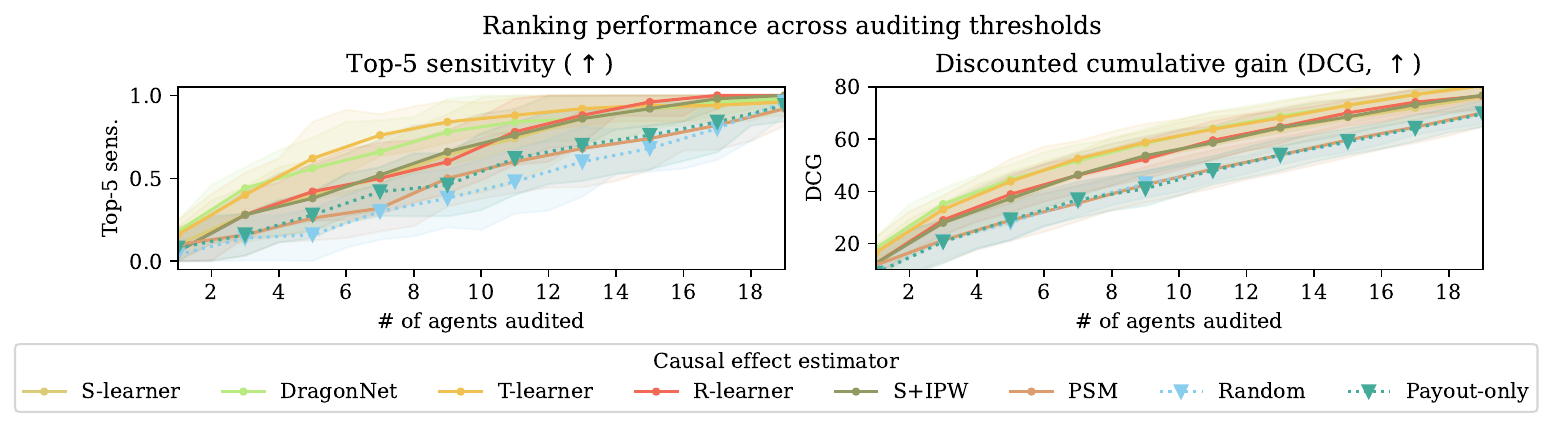}
    \caption{Sensitivity analysis of all causal methods tested, mean range 0.2. $\triangledown$: na{\"i}ve baseline. $\circ$: causal effect estimator.}
    \label{fig:causal_0.2}
\end{figure}
\begin{figure}[h!]
    \centering
    \includegraphics[width=\linewidth]{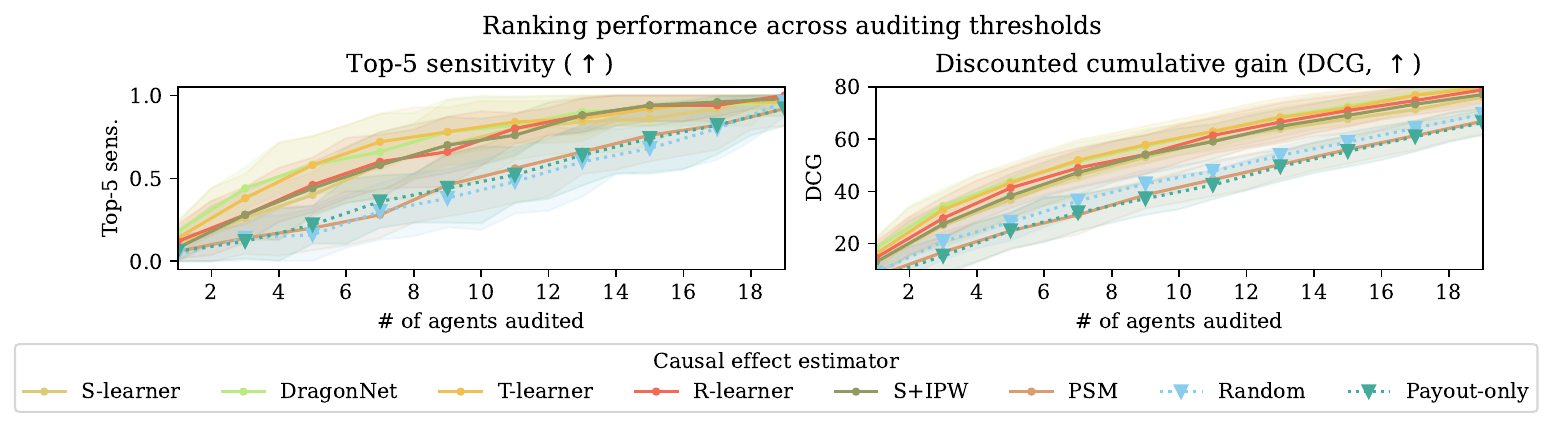}
    \caption{Sensitivity analysis of all causal methods tested, mean range 0.3. $\triangledown$: na{\"i}ve baseline. $\circ$: causal effect estimator.}
    \label{fig:causal_0.3}
\end{figure}
\begin{figure}[h!]
    \centering
    \includegraphics[width=\linewidth]{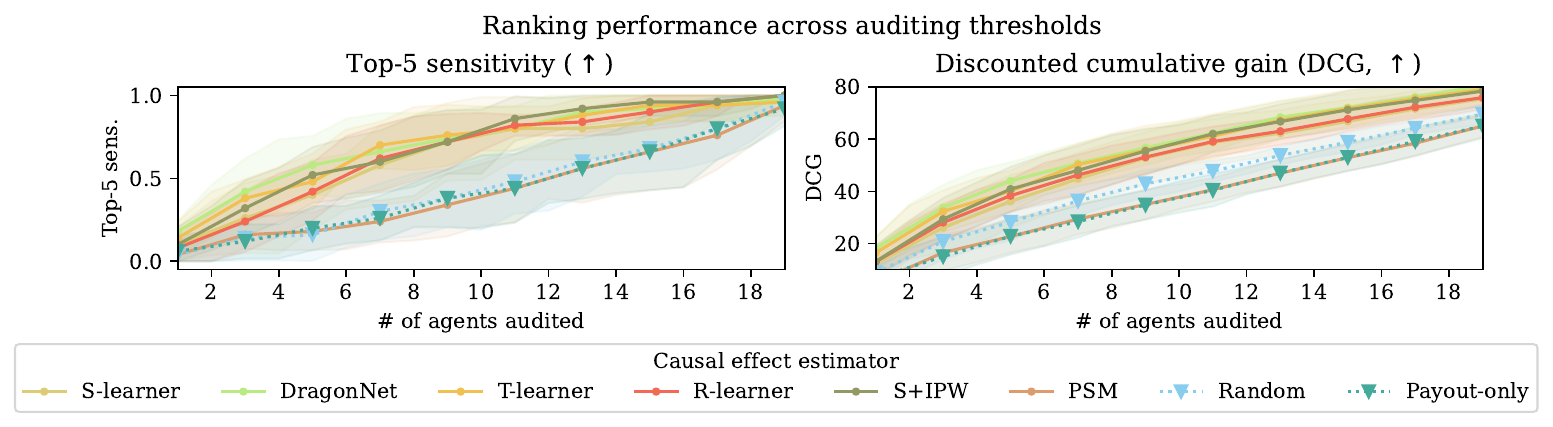}
    \caption{Sensitivity analysis of all causal methods tested, mean range 0.4. $\triangledown$: na{\"i}ve baseline. $\circ$: causal effect estimator.}
    \label{fig:causal_0.4}
\end{figure}
\begin{figure}[h!]
    \centering
    \includegraphics[width=\linewidth]{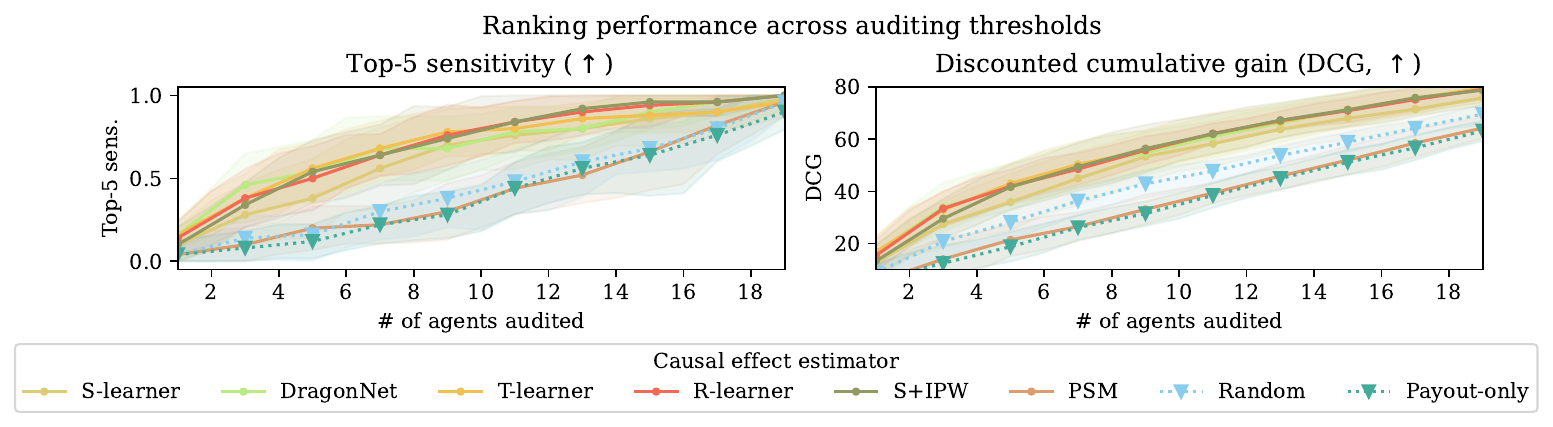}
    \caption{Sensitivity analysis of all causal methods tested, mean range 0.5. $\triangledown$: na{\"i}ve baseline. $\circ$: causal effect estimator.}
    \label{fig:causal_0.5}
\end{figure}
\begin{figure}[h!]
    \centering
    \includegraphics[width=\linewidth]{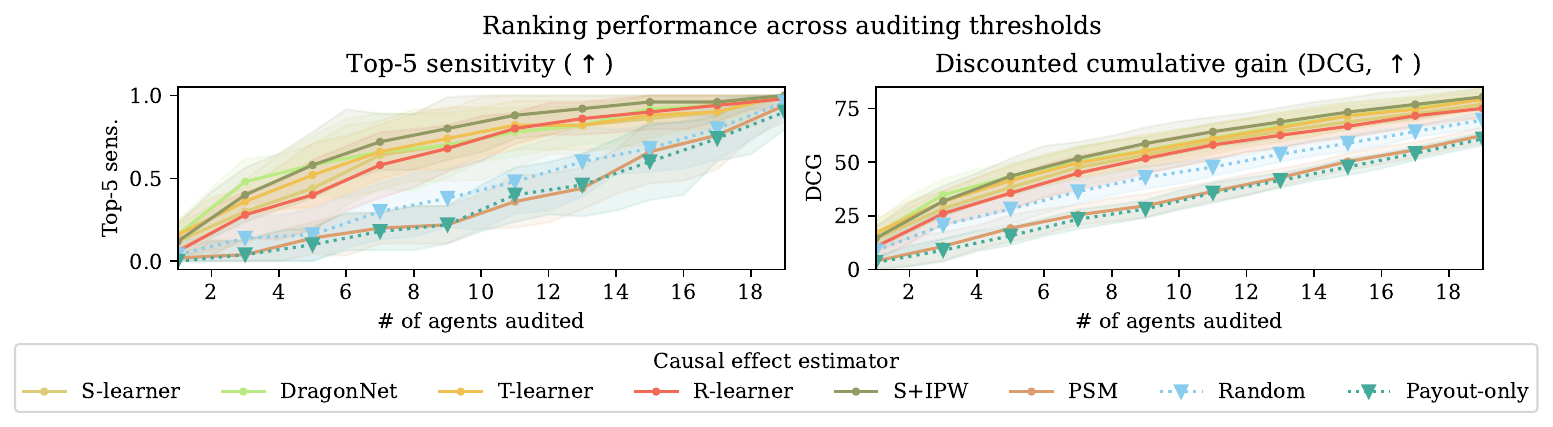}
    \caption{Sensitivity analysis of all causal methods tested, mean range 0.6. $\triangledown$: na{\"i}ve baseline. $\circ$: causal effect estimator.}
    \label{fig:causal_0.6}
\end{figure}
\begin{figure}[h!]
    \centering
    \includegraphics[width=\linewidth]{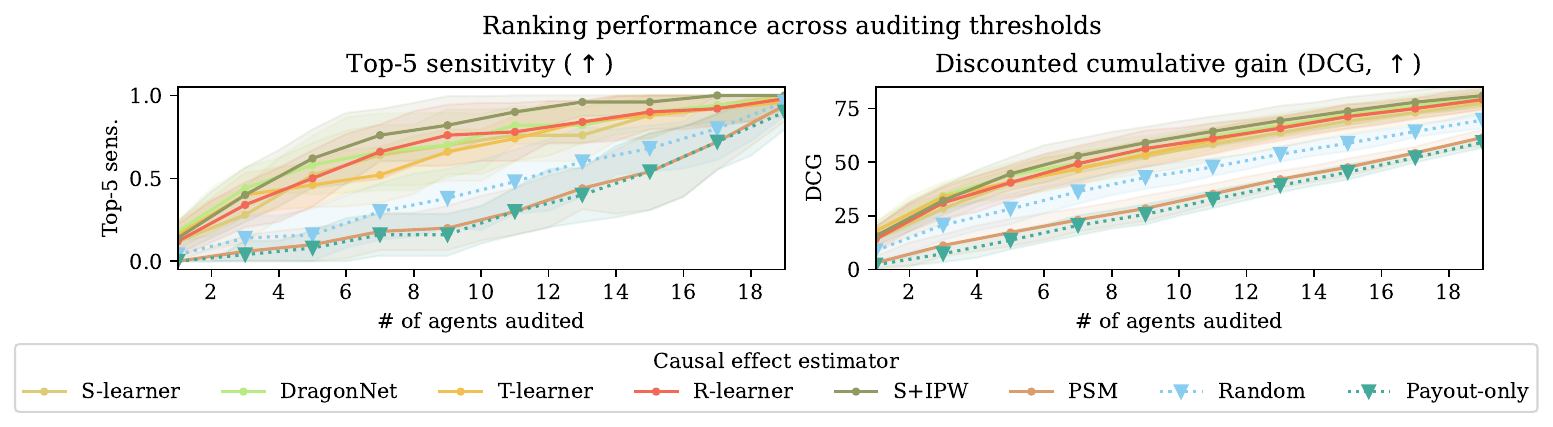}
    \caption{Sensitivity analysis of all causal methods tested, mean range 0.7. $\triangledown$: na{\"i}ve baseline. $\circ$: causal effect estimator.}
    \label{fig:causal_0.7}
\end{figure}
\begin{figure}[h!]
    \centering
    \includegraphics[width=\linewidth]{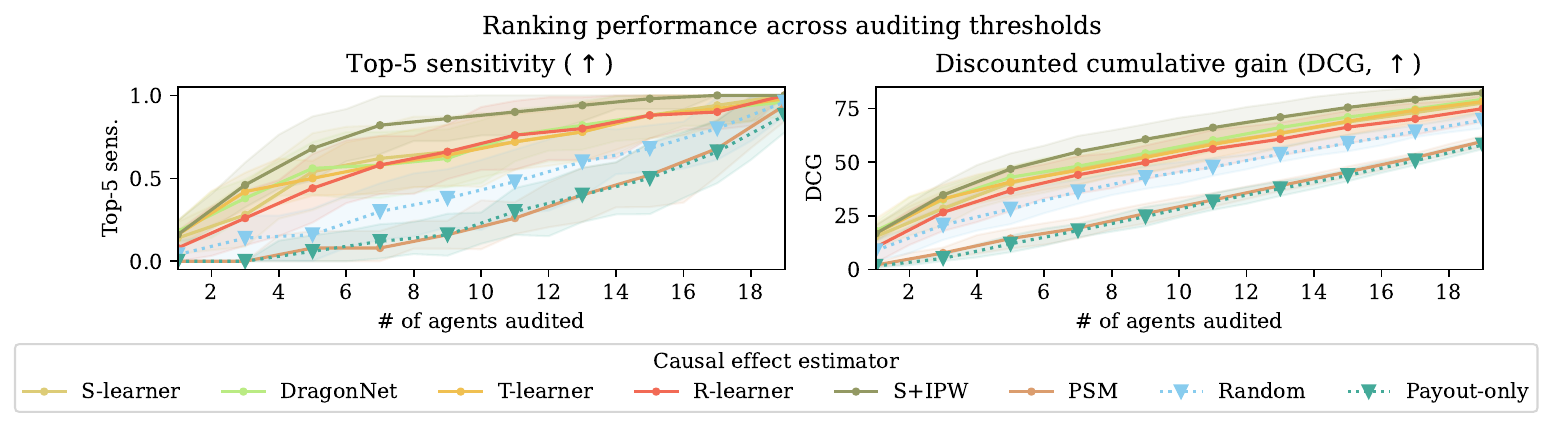}
    \caption{Sensitivity analysis of all causal methods tested, mean range 0.8. $\triangledown$: na{\"i}ve baseline. $\circ$: causal effect estimator.}
    \label{fig:causal_0.8}
\end{figure}
\begin{figure}[h!]
    \centering
    \includegraphics[width=\linewidth]{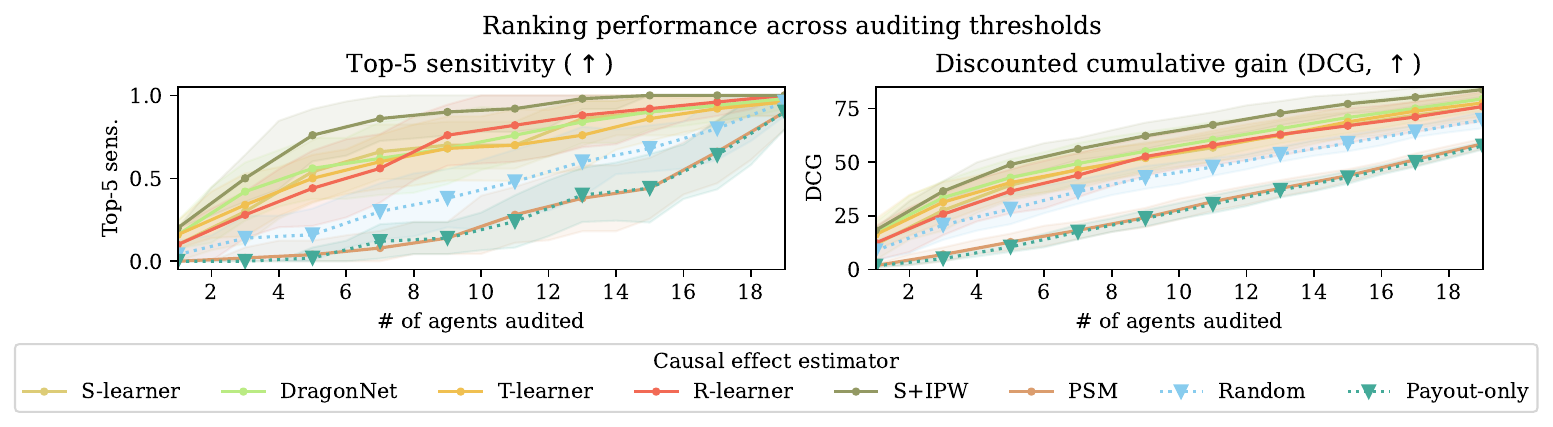}
    \caption{Sensitivity analysis of all causal methods tested, mean range 0.9. $\triangledown$: na{\"i}ve baseline. $\circ$: causal effect estimator.}
    \label{fig:causal_0.9}
\end{figure}
\begin{figure}[h!]
    \centering
    \includegraphics[width=\linewidth]{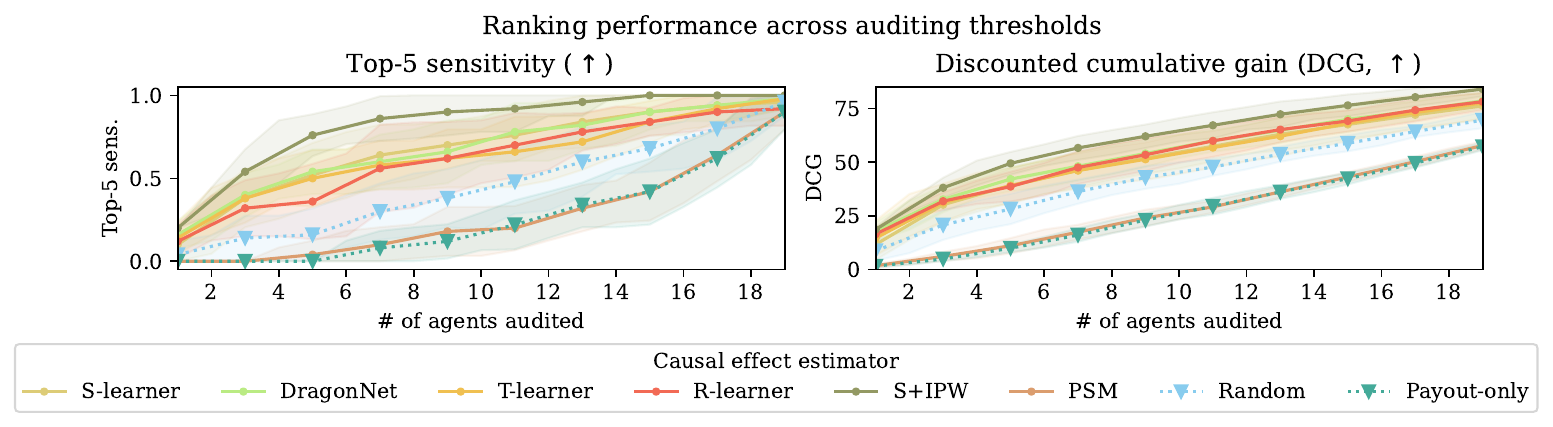}
    \caption{Sensitivity analysis of all causal methods tested, mean range 1.0. $\triangledown$: na{\"i}ve baseline. $\circ$: causal effect estimator.}
    \label{fig:causal_1.0}
\end{figure}

We show plots with the top-5 sensitivity and DCG at all levels of confounding evaluated (as measured by the mean range $R_\mu \in \{0, 0.1,\dots, 1.0\}$) for causal effect estimators only, plus random and payout-only methods for comparison. An index of figures follows:
\begin{itemize}
    \item Figure~\ref{fig:causal_0.0}: $R_\mu = 0.0$ 
    \item Figure~\ref{fig:causal_0.1}: $R_\mu = 0.1$
    \item Figure~\ref{fig:causal_0.2}: $R_\mu = 0.2$
    \item Figure~\ref{fig:causal_0.3}: $R_\mu = 0.3$
    \item Figure~\ref{fig:causal_0.4}: $R_\mu = 0.4$
    \item Figure~\ref{fig:causal_0.5}: $R_\mu = 0.5$
    \item Figure~\ref{fig:causal_0.6}: $R_\mu = 0.6$ 
    \item Figure~\ref{fig:causal_0.7}: $R_\mu = 0.7$
    \item Figure~\ref{fig:causal_0.8}: $R_\mu = 0.8$
    \item Figure~\ref{fig:causal_0.9}: $R_\mu = 0.9$
    \item Figure~\ref{fig:causal_1.0}: $R_\mu = 1.0$
\end{itemize}

Note that, even absent confounding, causal approaches outperform the payout-only model. This is because causal approaches explicitly incorporate the covariates into modeling, while the payout-only model directly uses the marginal outcome distribution. Incorporating covariates into regression models for treatment effect estimation can decrease estimator variance~\citep{imbens2015causal} (but is not guaranteed to do so), consistent with the empirical results.

Propensity score matching (PSM) also performs poorly across all levels of confounding. Since the matching approaches conduct matching pairwise across observations seen by agent pairs, the causal effect estimates are computed in a subset of similar observations across one pair of agents, but not the subset of similar observations across \textit{all} distributions agent observations. 
This suggests that controlling for confounding simultaneously across all levels of treatment is potentially important for applying causal effect estimators to gaming detection.
Ultimately, matching approaches may not scale to large numbers of treatments, such as those expected in multi-agent strategic adaptation.
Non-optimal matching approaches (\textit{e.g.}, greedy matching without replacement) are a potential workaround, but we leave the adaptation of matching methods to large numbers of treatments to future work.

We note that, at low levels of confounding, the difference between the S-learner and T-learner may be dataset-dependent: empirically, S-learners often regularize causal effect estimates towards zero, while T-learners thrive when causal effects are non-zero and heterogeneous~\citep{kunzel2019metalearners, chernozhukov2018double}, as in our synthetic dataset. 
Thus, per-agent modeling, as done by the T-learner and DragonNet, can better capture the complex treatment effects in our dataset. 

Furthermore, the R-learner and S+IPW both perform poorly at \emph{low} levels of confounding, but improve at high levels of confounding. Since both the R-learner and S+IPW fit a nuisance propensity score estimator, this suggests that difficulties in propensity score estimation at low levels of confounding could potentially explain the observed trends.

The underperformance of the R-learner may be surprising given its doubly-robust properties, but the high-dimensional generalization~\citep{kaddour2021causal} requires restrictive conditions for convergence. Formally, the R-learner fits four models $m, g, h, e$ of the form
\begin{equation}
    d \sim m(x) + g(x)^\top (h(p) - e(x)),
\end{equation}
where $m$ is fit independently, and $g, h, e$ are fitted using alternating optimization. The final treatment effect estimate of swapping from agent $p$ to $p'$ is given by $g(x)^\top h(p) - g(x)^\top h(p')$, but the oracle representation of $h(\cdot)$ is unknown and must be fitted. Convergence of the nuisance parameter estimate of $e(x)$ to the oracle value of $h(p)$ is necessary for convergence of the overall treatment effect estimate.

\section{Training}
\label{app:training}

\subsection{Model architectures}

All approaches that fit a model are built based on a fully-connected neural network with two hidden layers and 300 neurons per layer plus ReLU activations. 
The output of the neural network either has size two with a softmax non-linearity (for classification; \emph{i.e.}, predicting agent decisions), or no activation and a pre-specified output size (\emph{i.e.}, for generating feature maps in the high-dimensional R-learner).

We describe approach-specific modifications to the architectures as follows:

\paragraph{S+IPW.} The estimated weights are used as a sample weight when training the S-learner. At inference time, weights are also computed for test examples based on the propensity model fitted on the training set to take an inverse propensity-weighted average of the S-learner estimates.

\paragraph{DragonNet.} We changed the propensity prediction head from a binary classification (as in the original paper~\citep{shi2019adapting}) to a multi-class classification head, since there are multiple treatments in our setting. Furthermore, we introduce a new outcome modeling head per agent. Targeted regularization is omitted due to the multi-treatment setup.

\paragraph{R-Learner.} Feature maps for all nuisance parameters have dimensionality 10. The generalized R-learner uses alternating optimization to fit some nuisance parameters, where two models representing a product decomposition of the response function are updated $K$ times for every update of a ``propensity feature'' model. We set $K=10$; we defer to ~\citep{kaddour2021causal}, page 6 for more details about the training procedure for the generalized R-learner, from which we designed our implementation.

\subsection{Anomaly detection hyperparameters}

For KNN, we keep the neighborhood size at 5, the default value. DIF uses neural network-based random projections to compute an anomaly score. Thus, we use the same architecture for DIF as used for causal approaches, with an ensemble of 50 representations. Each representation is used as input into an isolation forest of size 6~\citep{liu2008isolation}.\footnote{An isolation forest is a separate anomaly detection method, in which the anomaly score is related to the number of random ``splits'' with respect to a single randomly-selected covariate (\emph{i.e.}, a threshold of the form $x_i \geq r$) needed to isolate a point. This method assumes that outliers are ``easier'' to isolate, and require fewer splits. Multiple random-splitting routines (isolation trees) are ensembled to form an isolation forest.} ECOD does not take hyperparameters~\citep{li2022ecod}.

\subsection{Dataset splits}

We use a seeded random development-test split (7:3) for all datasets. The development split is reserved for all model fitting, while all causal effect estimates are reported on the test split. The development split is further randomly split into a training and a validation set. All model selection techniques (\emph{e.g.}, early stopping) are performed with respect to evaluation metrics on the validation set.

\subsection{Training hyperparameters}

\paragraph{Fully synthetic data}
We use the following hyperparameters for training all models:

\begin{itemize}
    \item Optimizer: SGD with learning rate $10^{-2}$ and weight decay $10^{-3}$.
    \item Learning rate schedule: We reduce the learning rate by a factor of 0.1 after 5 epochs of non-improvement with respect to the validation loss.
    \item Training length: A \textit{maximum} of 1000 epochs, with early stopping (patience: 10 epochs) based on validation loss. 
\end{itemize}

\paragraph{Medicare FFS}

\begin{itemize}
  \item Optimizer: SGD with learning rate $10^{-2}$ and weight decay $10^{-3}$.
  \item Learning rate schedule: We reduce the learning rate by a factor of 0.1 after 5 epochs of non-improvement with respect to the validation loss.
 \item Training length: A \textit{maximum} of 1000 epochs, with early stopping (patience: 10 epochs) based on validation loss. 
\end{itemize}


\section{Software and Hardware}
\label{app:software}

\subsection{Software}
All code was written in Python 3.10.4 (license: PSF). 
All non-causal anomaly detection approaches were implemented using PyOD (license: BSD 2-clause)~\citep{zhao2019pyod}.
All neural networks were implemented in PyTorch 2.2.0 (license: Custom ``BSD-style''\footnote{See \url{https://github.com/pytorch/pytorch/blob/main/LICENSE}.})~\cite{paszke2019pytorch}, using Skorch 0.15.0 (license: BSD 3-clause)~\citep{skorch} as a wrapper. Metrics were computed using both Scikit-Learn 1.3.2 (license: BSD 3-clause)~\cite{scikit-learn} and Scipy 1.11.4 (license: BSD 3-clause)~\cite{2020SciPy-NMeth}. For the fully synthetic data generation process, CVXPY 1.4.2 (license: Apache  2.0)~\citep{diamond2016cvxpy} was used to solve each agent's utility maximization problem, and used in tandem with SCIP 9.0 (\texttt{pyscipopt} 5.0.0; license: Apache 2.0) for the matching approaches (formulated as mixed-integer programs)~\citep{BolusaniEtal2024OO}. Numpy 1.22.3 (license: BSD-style)~\citep{harris2020array}\footnote{See \url{https://github.com/numpy/numpy/blob/main/LICENSE.txt}.} and Pandas 2.0.3 (license: BSD 3-clause)~\citep{reback2020pandas} were used for data manipulation. Matplotlib 3.8.2 (empirical results; license: PSF-style)\footnote{See \url{https://matplotlib.org/stable/project/license.html}.} and Adobe Illustrator 2023 (overview figures; license: commercial, ``Named User Licensing''\footnote{See \url{https://helpx.adobe.com/enterprise/using/licensing.html}.}) were used for figure generation. For the Medicare cohorts, we generated HCC (Hierarchical Condition Categories; used by the Center for Medicare Services) codes from raw diagnosis codes reported in claims data via HCCPy 0.1.9 (license: Apache 2.0)\footnote{\url{https://github.com/yubin-park/hccpy}}.

Other dependencies include \texttt{tqdm} 4.66.2 for rendering progress bars (license: MPL 2.0 and MIT), \texttt{gitpython} 3.1.43 for bookkeeping (license: BSD 3-clause), \texttt{pandarallel} 1.6.5 for parallel data processing (license: BSD 3-clause), and \texttt{ruamel} 0.18.6 (license: MIT) for configuration file management. All software excepting Adobe Illustrator is open-source and free for use.

\subsection{Hardware}
All experiments were run on either one Titan V or V100 GPU using 12.9GB of RAM as managed via a Slurm job submission system. 
Computing nodes had two 2.10GHz Intel Broadwell (Xeon E5-2620V4) processors each (16 cores total).
Execution time was limited to six hours per run, but all training runs (one model type on 10 datasets) lasted under one hour due to the relatively small size of the architectures and datasets under consideration.

\section{Code}
\label{app:code}

The fully-synthetic datasets, experimental code, and implementations of all approaches under evaluation can be found at \url{https://github.com/MLD3/gaming_detection}. The authors do not have permission to release any of the Medicare data, but the relevant data processing code will be included.

\end{document}